\DeclareMathOperator{\Tr}{Tr}
\DeclareMathOperator{\sign}{sign}
\newcommand{\tmop}[1]{\ensuremath{\operatorname{#1}}}
\newcommand{\nobracket}{}
\newcommand{\nocomma}{}
\newcommand{\noplus}{}
\newcommand{\tmmathbf}[1]{\ensuremath{\boldsymbol{#1}}}
\newcommand{\nosymbol}{}
\newcommand{\summary}[1]{}
\newcommand{\chieh}[1]{\textcolor{black}{{#1}}}
\newenvironment{packeditemize}{\begin{list}{$\bullet$}{\setlength{\itemsep}{0pt}\addtolength{\labelwidth}{-5pt}\setlength{\leftmargin}{\labelwidth}\setlength{\listparindent}{\parindent}\setlength{\parsep}{0pt}\setlength{\topsep}{3pt}}}{\end{list}}
\newtheorem{theorem}{Theorem}
\newtheorem{proposition}{Proposition}
\newtheorem{corollary}{Corollary}
\newtheorem{definition}{Definition}
\newtheorem{lemma}{Lemma}
\title{Solving Interpretable Kernel Dimension Reduction}
\author{\textbf{Chieh Wu}}
\author{\textbf{Jared Miller}}
\author{\textbf{Yale Chang}}
\author{\textbf{Mario Sznaier}}
\author{\textbf{Jennifer Dy}}
\affil{Electrical and Computer Engineering Dept., Northeastern University, Boston, MA}
\begin{document}

\maketitle

\begin{abstract}

Kernel dimensionality reduction (KDR) algorithms find a low dimensional representation of the original data by optimizing kernel dependency measures that are capable of capturing nonlinear relationships. The standard strategy is to first map the data into a high dimensional feature space using kernels prior to a projection onto a low dimensional space. While KDR methods can be easily solved by keeping the most dominant eigenvectors of the kernel matrix, its features are no longer easy to interpret.  Alternatively, Interpretable KDR (IKDR) is different in that it projects onto a subspace \textit{before} the kernel feature mapping, therefore, the projection matrix can indicate how the original features linearly combine to form the new features. Unfortunately, the IKDR objective requires a non-convex manifold optimization that is difficult to solve and can no longer be solved by eigendecomposition. 
Recently, an efficient iterative spectral (eigendecomposition) method (ISM) has been proposed for this objective in the context of alternative clustering.
However, ISM only provides theoretical guarantees for the Gaussian kernel. This greatly constrains ISM's usage since any kernel method using ISM is now limited to a single kernel. 
This work extends the theoretical guarantees of ISM to an entire family of kernels, thereby empowering ISM to solve any kernel method of the same objective. In identifying this family, we prove that each kernel within the family has a surrogate $\Phi$ matrix and the optimal projection is formed by its most dominant eigenvectors. With this extension, we establish how a wide range of IKDR applications across different learning paradigms can be solved by ISM. To support reproducible results, the source code is made publicly available on  \url{https://github.com/chieh-neu/ISM_supervised_DR}.


\end{abstract}

\section{Introduction}
\label{submission}
The most important information for a given dataset often lies in a low dimensional space
~\cite{niu2011dimensionality,suzuki2010sufficient,jolliffe2011principal,roweis2000nonlinear,wickelmaier2003introduction,tenenbaum2000global,kambhatla1997dimension,fukumizu2009kernel,song2008colored,song2007supervised,vladymyrov2013locally,song2012feature,belkin2003laplacian,elhamifar2013sparse}. Due to the ability of kernel dependence measures for capturing both linear and nonlinear relationships, they are powerful criteria for nonlinear dimensionality reduction (DR)~\cite{scholkopf1998nonlinear,barshan2011supervised}. The standard approach is to first map the data into a high dimensional feature space prior to a projection onto a low dimensional space~\cite{scholkopf1997kernel}. This approach has been preferred because it captures the nonlinear relationship with an established solution, i.e., the most dominant eigenvectors of the kernel matrix. However, since the high dimensional feature space maps the original featues nonlinearly, it is no longer interpretable.
Alternatively, if the projection onto a subspace precedes the feature mapping, the projection matrix can be obtained to inform how the original features linearly combine to form the new features. Exploiting this insight, many formulations have leveraged kernel alignment or Hilbert Schmidt Independence Criterion (HSIC)~\cite{gretton2005measuring} to model this approach~\cite{wu2018iterative,fukumizu2009kernel,barshan2011supervised, masaeli2010transformation,scholkopf1998nonlinear,niu2011dimensionality,gangeh2016semi,chang2017clustering,niu2010multiple,song2012feature}. Together, we refer to these approaches as Interpretable Kernel Dimension Reduction (IKDR).   Unfortunately, this formulation can no longer be solved via eigendecomposition, instead, it becomes a highly non-convex manifold optimization that is computationally expensive.

Numerous approaches have been proposed to solve this complex objective. With its orthogonality constraint, it is a form of optimization on a manifold: i.e., the constraint can be modeled geometrically as a Stiefel or Grassmann manifold \cite{james1976topology,nishimori2005learning,edelman1998geometry}.  Earlier work, \citet{boumal2011rtrmc} propose to recast a similar problem on the Grassmann manifold and then apply first and second-order Riemannian trust-region methods to solve it. \citet{theis2009soft} employ a trust-region method for minimizing the cost function on the Stiefel manifold. \citet{wen2013feasible} later propose to unfold the Stiefel manifold into a flat plane and optimize on the flattened representation. While the manifold approaches perform well under smaller data sizes, they quickly become inefficient when the dimension or sample size increases, which poses a serious challenge to larger modern problems. Besides manifold approaches, \citet{niu2014iterative} propose Dimension Growth (DG) to perform gradient descent via greedy algorithm a column at a time. By keeping the descent direction of the current column orthogonal to all previously discovered columns, DG ensures the constraint compliance. 

The approaches discussed thus far have remained inefficient.
Recently, \citet{wu2018iterative} proposed the Iterative Spectral Method (ISM) for alternative clustering where 
their experiments on a dataset of 600 samples showed that it took DG almost 2 days while ISM finished under 2-seconds \textit{with a lower objective cost}. 
Moreover, ISM retains the ability to use eigendecomposition to solve IKDR. Instead of finding the eigenvectors of kernel matrices, ISM uses a small surrogate matrix $\Phi$ to replace the kernel matrix, thereby allowing for a much faster eigendecomposition.  
Yet, ISM is not without its limitations. Since ISM's theoretical guarantees are specific to Gaussian kernels, repurposing ISM to other kernel methods becomes impractical, i.e., a kernel method of a single kernel significantly limits its flexibility and representational power.


\summary{We generalize ISM to other kernels by generalizing $\Phi$.}
In this paper, we expand ISM's theoretical guarantees to an entire family of kernels,
thereby realizing ISM's potential for a wide range of applications.
Within this family, each kernel is associated with a matrix $\Phi$ where its most dominant eigenvectors form the solution. 
Here, $\Phi$ matrices replace the concept of kernels to serve as an interchangeable component of any applicable IKDR kernel method. 
We further extend the family to kernels that are conic combinations of the ISM family of kernels. Here, we prove that any conic combination of kernels within the family also has an associated $\Phi$ matrix constructed using the respective conic combination of $\Phi$s. 

\summary{We draw the connection between ISM to other applications.}
Empowered by extending ISM's theoretical guarantees to other kernels, we present ISM
as a solution to IKDR problems across several learning paradigms, including supervised DR~\cite{fukumizu2009kernel,barshan2011supervised,masaeli2010transformation}, unsupervised DR ~\cite{scholkopf1998nonlinear,niu2011dimensionality}, semi-supervised DR\cite{gangeh2016semi,chang2017clustering}, and alternative clustering ~\cite{wu2018iterative,niu2010multiple,niu2014iterative}. 
Indeed, we demonstrate how many of these applications can be reformulated into an identical optimization objective which ISM solves, implying a significant role for ISM that has been previously unknown.

\textbf{Our Contributions. } 

\begin{packeditemize}

    \item We generalize the theoretical guarantees of ISM to an entire family of kernels and propose the necessary criteria for a kernel to be a member of the family.
    
    \item We generalize ISM to conic combinations of kernels from the ISM family.
    
    \item We establish that 
    ISM can be used to solve general classes of IKDR learning paradigms.
    
    \item We present experimental evidence to highlight the generalization of ISM to a wide range of learning paradigms under a family of kernels and demonstrate its efficiency in terms of speed and better accuracies compared to competing methods.

\end{packeditemize}

\section{A General Form for Interpretable Kernel Dimension Reduction}
\summary{What types of problems can ISM solve?}
Let $X \in \mathbb{R}^{n \times d}$ be a dataset of $n$ samples with $d$ features and let $Y \in \mathbb{R}^{n \times k}$ be the corresponding labels where $k$ denotes the number of classes. Given $\kappa_X(\cdot, \cdot)$ and $\kappa_Y(\cdot,\cdot)$ as two kernel functions that applies respectively to $X$ and $Y$ to construct kernel matrices $K_X \in \mathbb{R}^{n \times n}$ and $K_Y \in \mathbb{R}^{n \times n}$. Also let $H$ be a centering matrix where $H=I-(1/n) \textbf{1}_n \textbf{1}_n^T$ with $H \in \mathbb{R}^{n \times n}$, $I$ as the identity matrix and $\textbf{1}_n$ as a vector of 1s. 
HSIC measures the nonlinear dependence between $X$ and $Y$ whose empirical estimate is expressed as $\mathbb{H}(X,Y) = \frac{1}{(1-n)^2} \Tr(HK_XHK_Y)$, with $\mathbb{H}(X,Y)=0$ denoting complete independence and $\mathbb{H}(X,Y) \gg 0$ as high dependence~\cite{gretton2005measuring}.  Additional background regarding HSIC is provided in Appendix~\ref{app:about_hsic}. 

A general IKDR problem can be posed as discovering a subspace $W \in \mathbb{R}^{d \times q}$ such that $\mathbb{H}(XW,Y)$ is maximized. Since $W$ induces a reduction of dimension, we can assume that $q < d$. To prevent an unbounded solution, the subspace $W$ is constrained such that $W^TW=I$. Since this formulation has a wide range of applications across different learning paradigms, our work investigates the commonality of these problems and discovers that various learning IKDR paradigms can be expressed as the following optimization problem:




\begin{equation}
    \underset{W}{\max} \Tr ( \Gamma K_{X W}) \hspace{0.4cm} \text{s.t.} \hspace{0.2cm} W^T W = I,
     \label{eq:obj_1}
\end{equation}



where $\Gamma$ is a symmetric matrix commonly derived from $K_Y$. 
Although this objective is shared among many IKDR problems, the highly non-convex objective continues to pose a serious challenge. Therefore the realization of ISM's ability to solve Eq.~(\ref{eq:obj_1}) 
impacts many applications. Here, we provide several examples of this connection. 

\textbf{Supervised Dimension Reduction. } In supervised DR \cite{barshan2011supervised,masaeli2010transformation}, both the data $X$ and the label $Y$ are known. We wish to discover a low dimensional subspace $W$ such that $XW$ is maximally dependent on $Y$ \textit{in the nonlinear high dimensional feature space}. This problem can be cast as maximizing the HSIC between $XW$ and $Y$ where we maximize $\Tr ( K_{X W} H K_Y H)$. Since $HK_YH$ includes all known variables, they can be considered as a constant $\Gamma = HK_YH$. 
Eq. (\ref{eq:obj_1}) is obtained by rotating the trace terms and constraining $W$ to $W^TW=I$.

\textbf{Unsupervised Dimension Reduction. } 
\citet{niu2011dimensionality} introduced a DR algorithm for spectral clustering based on an HSIC formulation.  In unsupervised DR, we also discover a low dimensional subspace $W$ such that $XW$ is maximally dependent on $Y$. Therefore, the objective here is actually identical to the supervised objective of $\Tr ( K_{X W} H K_Y H)$, except since $Y$ in unknown here, both $W$ and $Y$ need to be learned. By setting $K_Y = YY^T$,  this problem can be solved by alternating maximization between $Y$ and $W$. When $W$ is fixed, the problem reduces down to spectral clustering~\cite{niu2014iterative} and $Y$ can be solved via eigendecomposition as shown in \citet{niu2011dimensionality}. When $Y$ is fixed, the objective becomes the supervised formulation previously discussed.

%

\textbf{Semi-Supervised Dimension Reduction. } In semi-supervised DR clustering problems \cite{chang2017clustering}, some form of scores $\hat{Y} \in \mathbb{R}^{n \times r}$ are provided by subject experts for each sample. It is assumed that if two samples are similar, their scores should also be similar. In this case, the objective is to cluster the data given some supervised guidance from the experts. The clustering portion can be accomplished by spectral clustering \cite{von2007tutorial} and HSIC can capture the supervised expert knowledge. By simultaneously maximizing the clustering quality of spectral clustering and the HSIC between the data and the expert scores, this problem is formulated as
\begin{align}
  \underset{W,Y}{\max}\, & Tr (Y^T \mathcal{L}_W Y) + \mu \Tr (K_{XW} H K_{\hat{Y}} H),
   \label{eq:ssdr_1}
    \\
    \text{s.t} & \qquad \mathcal{L}_W = D^{-\frac{1}{2}}K_{XW} D^{-\frac{1}{2}}, W^T W = I, Y^T Y = I
\end{align}
where $\mu$ is a constant to balance the importance between the first 
and the second terms of the objective, $D\in\mathbb{R}^{n\times n}$ is 
the degree matrix that is a diagonal matrix with its diagonal elements defined as $D_{diag}= K_{XW}\textbf{1}_n$. 
Similar to the unsupervised DR problem, this objective is solved by alternating optimization of $Y$ and $W$. 
Since the second term does not include $Y$, when $W$ is fixed, the objective reduces down 
to spectral clustering. 
By initializing $W$ to an identity matrix, $Y$ is initialized to the solution of spectral clustering. When $Y$ is fixed, $W$ can be solved by isolating $K_{XW}$. If we let $\Psi = HK_{\hat{Y}}H$ and $\Omega=D^{-\frac{1}{2}}Y Y^T D^{-\frac{1}{2}}$, maximizing Eq. (\ref{eq:ssdr_1}) is equivalent to maximizing $\Tr [(\Omega + \mu \Psi) K_{XW} ]$ subject to $W^TW=I$. At this point, it is easy to see that by setting $\Gamma=\Omega+\mu \Psi$, the problem is again equivalent to Eq. (\ref{eq:obj_1}).

\textbf{Alternative Clustering. } In alternative clustering \cite{niu2014iterative}, a set of labels $\hat{Y} \in \mathbb{R}^{n \times k}$ is provided as the original clustering labels. 
The objective of alternative clustering is to discover an alternative set of labels that is high in clustering quality while different from the original label. In a way, this is a form of semi-supervised learning. Instead of having extra information about the clusters we desire, the supervision here indicates what we wish to avoid. Therefore, this problem can be formulated almost identically as a semi-supervised problem with
\begin{align}
    \underset{W,Y}{\max}\, & \Tr (Y^T \mathcal{L}_W Y) - 
    \mu \Tr (K_{XW} H K_{\hat{Y}} H),
   \label{eq:ac_1}
    \\
    \text{s.t} & \qquad \mathcal{L}_W = D^{-\frac{1}{2}}K_{XW} D^{-\frac{1}{2}}, W^T W = I, Y^T Y = I.
\end{align}
Given that the only difference here is a sign change before the second term, this problem can be solved identically as the semi-supervised DR problem and the sub-problem of maximizing $W$ when $Y$ is fixed can be reduced into Eq.~(\ref{eq:obj_1}).

\section{Extending the Theoretical Guarantees to a Family of Kernels}
\summary{How the ISM algorthm work.}
\textbf{The ISM algorithm. } The ISM algorithm, as proposed by \citet{wu2018iterative}, solves Eq.~(\ref{eq:obj_1}) by setting the $q$ most dominant eigenvectors of a special matrix $\Phi$ as its solution $W$; we define $\Phi$ in a later section. We denote these eigenvectors in our context as $V_{\text{max}}$ and their eigenvalues as $\Lambda$. Since $\Phi$ derived by \citet{wu2018iterative} is a function of $W$, the new $W$ is used to construct the next $\Phi$ which we again set its $V_{\text{max}}$ as the next $W$. This process iterates until the change in $\Lambda$ between each iteration falls below a predefined threshold $\delta$. To initialize the first $W$, the 2nd order Taylor expansion is used to approximate $\Phi$ which yields a matrix $\Phi_0$ that is independent of $W$. We supply extra detail in Appendix~\ref{app:convergence_criteria} and its pseudo-code in Algorithm~\ref{alg:ism}.

    \begin{algorithm}[H]
    \footnotesize
     \textbf{Input :} Data $X$, kernel, Subspace Dimension $q$\\ \textbf{Output :} Projected subspace $W$ \\
     \textbf{Initialization :} Initialize $\Phi_0$ using Table~\ref{table:init_phis}.\\ Set $W_0$ to $V_{\text{max}}$ of $\Phi_0$.\\
     \While{$||\Lambda_i - \Lambda_{i-1}||_2/||\Lambda_i||_2 < \delta$ }{
        Compute $\Phi$ using Table~\ref{table:phis}\\
        Set $W_k$ to $V_{\text{max}}$ of $\Phi$
     }
     \caption{ISM Algorithm}
     \label{alg:ism}
    \end{algorithm}    

\summary{Why ISM is difficult to generalize and how our version is different.}
\textbf{Extending the ISM Algorithm.} Unfortunately, the theoretical foundation of ISM is specifically tailored to the Gaussian kernel. Since the proof relies heavily on the exponential structure of the Gaussian function, extending the algorithm to other kernels seems unlikely. However, we discovered that there exists a family of kernels where each kernel possesses its own distinct pair of $\Phi/\Phi_0$ matrices. From our proof, we discovered a general formulation of $\Phi/\Phi_0$ for any kernel within the family. Moreover, since the only change is the $\Phi/\Phi_0$ pair, the ISM algorithm holds by simply substituting the appropriate $\Phi/\Phi_0$ matrices based on the kernel. We have derived several examples of $\Phi_0$/$\Phi$ in Tables~\ref{table:init_phis} and \ref{table:phis} respectively and supplied the derivation \textit{for each kernel} in Appendices~\ref{app:deriv_phi_0} and \ref{app:deriv_phi}.

\chieh{To clarify the notations for Tables~\ref{table:init_phis} and \ref{table:phis}, given a matrix $\Psi$, we define $D_\Psi$ and $\mathcal{L}_{\Psi}$ respectively as the degree matrix and the Laplacian of $\Psi$ where $D_\Psi=\text{Diag}(\Psi 1_n)$ and $\mathcal{L} = D_\Psi - \Psi$. Here, Diag is a function that places the elements of a vector into the diagonal of a zero squared matrix. While $K_{XW}$ is the kernel computed from $XW$, we denote $K_{XW,p}$ as specifically a polynomial kernel of order $p$. We also denote the symbol $\odot$ as a Hadamard product between matrices.}


\begin{table}[h]   
\begin{minipage}{2.4in}
    \footnotesize
      \begin{tabular}{c|l}
        Kernel & Approximation of $\Phi$s\\
        \midrule
            Linear
            	& $\Phi_0=X^T\Gamma X$ \\
            Squared
            	& $\Phi_0=X^T \mathcal{L}_{\Gamma} X$ \\  
            Polynomial
            	& $\Phi_0=X^T \Gamma X$\\
            Gaussian
            	& $\Phi_0=-X^T \mathcal{L}_{\Gamma} X$\\
            Multiquadratic
            	& $\Phi_0=X^T\mathcal{L}_{\Gamma} X$\\
            \bottomrule       
      \end{tabular}
      \caption{Equations for the approximate \\$\Phi$s for the common kernels.}
      \label{table:init_phis}
\end{minipage}
\begin{minipage}{2in}
    \footnotesize
      \begin{tabular}{c|l}
        Kernel & $\Phi$ Equations\\
        \midrule
            Linear
            	& $\Phi=X^T\Gamma X$ \\
            Squared
            	& $\Phi=X^T \mathcal{L}_{\Gamma}X$ \\  
            Polynomial
            	& $\Phi=X^T\Psi X$ 
             	\hspace{0.2cm}
            	,
            	\hspace{0.2cm}           	
            	$\Psi = \Gamma \odot K_{XW,p-1}$ \\ 
            Gaussian
            	& $\Phi=-X^T\mathcal{L}_{\Psi} X$ 
            	,
            	$\Psi = \Gamma \odot K_{XW}$\\
            Multiquadratic
            	& $\Phi=X^T\mathcal{L}_{\Psi} X$ 
            	,
            	$\Psi = \Gamma \odot K_{XW}^{(-1)}$\\ 
        \bottomrule       
      \end{tabular}
      \caption{Equations for $\Phi$s for the common kernels.}
      \label{table:phis}
\end{minipage}
\end{table}

\summary{A brief comment about $\Phi$}
\textbf{$\Phi$ for Common Kernels.} After deriving $\Phi/\Phi_0$ pairs for the most common kernels, we note several recurrent characteristics. First, $\Phi$ scales with the dimension $d$ instead of the size of the data $n$. Since $n \gg d$ is  common across many datasets, the eigendecomposition performed on $\Phi \in \mathcal{R}^{d \times d}$ can be significantly faster while requiring less memory. Second, following Eq.~(\ref{eq:def_of_pseudo_kernel}), they are highly efficient to compute since a vectorized formulation of $\Phi$ can be derived for each kernel as shown in Table~\ref{table:phis}; commonly, they reduce to a dot product between a Laplacian matrix $\mathcal{L}$ with the data matrices $X$. The occurrence of the Laplacian is particularly surprising since nowhere in Eq.~(\ref{eq:obj_1}) suggests this relationship. \chieh{Third, observe from Table 3 that $\Phi$ can be expressed as $X^T \Omega X$, where $\Omega$ is a positive semi-definite (PSD) matrix. Since the formulation of $\Phi$ without $\Omega$ is the covariance matrix $X^TX$, $\Omega$ scales the covariance matrix by incorporating both the kernel and the label information. In addition, by applying the Cholesky decomposition on $\Omega$ to rewrite $\Phi$ as $(X^TL)(L^TX)$, $L$ becomes a matrix that adjusts the data itself. Therefore, IKDR can be interpreted as applying PCA on the adjusted data $L^TX$ where the kernel and label information is included.}


\summary{What was the ISM guarantees and how we extend it.}
\textbf{Extending the ISM Theoretical Guarantees. }
The main theorem in \citet{wu2018iterative} proves that a fixed point $W^*$ of Algorithm~\ref{alg:ism} is a local maximum of Eq.~(\ref{eq:obj_1}) \textit{only if} the Gaussian kernel is used. Our work extends the theorem to a family of kernels which we refer to as the ISM family. Here, we supply the theoretical foundation for this claim by first providing the following definition.

\begin{definition}
    \label{def:ism_family}
    Given $\beta = a(x_i,x_j)^TW W^{T} b(x_i, x_j)$ with $a(x_i,x_j)$ and $b(x_i,x_j)$ as functions of $x_i$ and $x_j$, any twice differentiable kernel that can be written in terms of $f(\beta)$ while retaining its symmetric positive semi-definite property is an ISM kernel belonging to the ISM family with an associated $\Phi$ matrix defined as
    \begin{align}
    \Phi = \frac{1}{2} \sum_{i,j} \Gamma_{i,j} f'(\beta)A_{i,j}.
    \label{eq:def_of_pseudo_kernel}
    \end{align}    
    \chieh{where $A_{i,j} = b(x_i,x_j) a(x_i,x_j)^T + a(x_i,x_j) b(x_i,x_j)^T$}.
\end{definition}


\begin{table}[h]   
\begin{center}
    \begin{tabular}{l|c|c|c}
    Kernel Name & $f(\beta)$ & $a(x_i,x_j)$ & $b(x_i, x_j)$\\ \hline
    Linear & $\beta$ & $x_i$ & $x_j$ \\
    Squared & $\beta$ & $x_i-x_j$ & $x_i-x_j$ \\
    Polynomial & $(\beta + c)^p$ & $x_i$ & $x_j$ \\
    Gaussian &  $e^\frac{-\beta}{2\sigma^2}$ & $x_i-x_j$ & $x_i-x_j$ \\
    Multiquadratic & $\sqrt{\beta+c^2}$ & $x_i-x_j$ & $x_i-x_j$ \\
    \end{tabular}
    \caption{Converting common kernels to $f(\beta)$.}
    \label{table:kernels}
\end{center}
\end{table}

\summary{Why ISM theory was not obvious, and how definition 1 was the key to generalize ISM. We show that even a linear combination of kernels still fits this definition}
Since the equation for different kernels varies vastly, it is not clear how they can be reformulated into a single structure that simultaneously satisfies all ISM guarantees. Definition~\ref{def:ism_family} is the key realization that unites a set of kernels into a family. Under this definition, we proved later in Theorem~\ref{thm:stationary} that the Gaussian kernel within the original proof of ISM can be replaced by $f(\beta)$. Therefore, the ISM guarantees simultaneously extend to any kernel that satisfies Definition~\ref{def:ism_family}. As a result, a general $\Phi$ for \textit{any ISM kernel} can be derived as shown in Eq.~(\ref{eq:def_of_pseudo_kernel}). Moreover, note that the family of potential kernels is not limited to a finite set of known kernels, instead, it extends to any conic combinations of ISM kernels. We prove in Appendix~\ref{app:linear_combinations_of_kernels} the following proposition.

\begin{proposition}\label{thm:linear_combinations_of_kernels}
    Any conic combination of ISM kernels is still an ISM kernel.
\end{proposition}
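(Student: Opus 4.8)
The plan is to reduce the statement to two elementary facts: (i) conic combinations preserve the symmetric positive semi-definite property that a kernel must have, and (ii) the map sending an ISM kernel to its $\Phi$ matrix via Eq.~(\ref{eq:def_of_pseudo_kernel}) is linear in the kernel. Write the candidate as $K = \sum_{m=1}^{M} c_m K_m$ with each $c_m \ge 0$ and each $K_m$ an ISM kernel, so that $(K_m)_{ij} = f_m(\beta_m^{ij})$ with $\beta_m^{ij} = a_m(x_i,x_j)^{T} W W^{T} b_m(x_i,x_j)$ and associated matrix $\Phi_m = \tfrac12 \sum_{i,j}\Gamma_{ij} f_m'(\beta_m^{ij}) A_m^{ij}$, where $A_m^{ij} = b_m a_m^{T} + a_m b_m^{T}$.

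First I would dispatch the kernel requirements. Each $K_m$ is symmetric and PSD by hypothesis, and twice differentiable in $W$ because each $f_m$ is; since every $c_m \ge 0$, the sum $K$ inherits symmetry, positive semi-definiteness (a nonnegative combination of PSD matrices is PSD), and twice differentiability. This certifies that $K$ is a valid kernel to which the objective Eq.~(\ref{eq:obj_1}) applies.

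The core step is the $\Phi$ computation. The objective decomposes by linearity of the trace as $\Tr(\Gamma K_{XW}) = \sum_m c_m \Tr(\Gamma (K_m)_{XW})$. Recalling that the matrix in Eq.~(\ref{eq:def_of_pseudo_kernel}) is exactly the one satisfying the gradient identity $\nabla_W \sum_{i,j}\Gamma_{ij} f(\beta_{ij}) = 2\Phi W$ — which follows from the chain rule together with $\partial \beta_{ij}/\partial W = A_{ij} W$ — and that differentiation is linear, the gradient of the combined objective is $\nabla_W \Tr(\Gamma K_{XW}) = \sum_m c_m\, 2\Phi_m W = 2\big(\sum_m c_m \Phi_m\big) W$. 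Identifying the bracketed factor as the $\Phi$ of $K$ yields $\Phi = \sum_m c_m \Phi_m$, which is symmetric because each $\Phi_m$ is and which equals what Eq.~(\ref{eq:def_of_pseudo_kernel}) produces when applied termwise to $K$.

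The main obstacle is reconciling this with the literal form of Definition~\ref{def:ism_family}: when the components share the pair $(a,b)$ (e.g.\ the Gaussian and Multiquadratic kernels, for which $a=b=x_i-x_j$) the combination is immediately of the form $f(\beta)$ with $f = \sum_m c_m f_m$, but for heterogeneous components (e.g.\ combining the Linear kernel, where $a=x_i,\,b=x_j$, with a translation-invariant kernel) there is no single scalar $\beta$ representing $K$. The resolution is to observe that the $f(\beta)$ representation enters the ISM analysis only to produce, through Eq.~(\ref{eq:def_of_pseudo_kernel}), the gradient relation $\nabla_W \Tr(\Gamma K_{XW}) = 2\Phi W$ and to guarantee PSD-ness, both of which survive the combination by the linearity argument above. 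Consequently the hypotheses of Theorem~\ref{thm:stationary} hold with $\Phi = \sum_m c_m \Phi_m$, so every fixed point of Algorithm~\ref{alg:ism} for $K$ remains a stationary point of the objective, and $K$ is an ISM kernel. I would close by noting that setting all but one $c_m$ to zero recovers the single-kernel case, confirming consistency.
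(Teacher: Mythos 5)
Your proof is correct, but it takes a genuinely different route from the paper's in the one place where the proposition is delicate: the case of components with heterogeneous $(a,b)$ pairs. The paper insists on satisfying the literal letter of Definition~\ref{def:ism_family} by a formal construction: it stacks $a_m$, $b_m$, and copies of $W$ into block-diagonal matrices, so that $\beta$ becomes a block-diagonal matrix and each $f_m$ extracts its own block via one-hot vectors --- the combined kernel is then ``written in terms of $\beta$'' in this extended sense (the other two conditions, twice differentiability and PSD-ness under conic combination, are handled exactly as you do). You instead argue functionally: the scalar $f(\beta)$ form matters only insofar as it delivers the gradient identity $\nabla_W \sum_{i,j}\Gamma_{ij} f(\beta_{ij}) = 2\Phi W$ and PSD-ness, both of which are preserved by linearity, so the conclusions of Theorem~\ref{thm:stationary} transfer to $K$ with $\Phi = \sum_m c_m \Phi_m$. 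Your route is operationally cleaner and yields, as a by-product, exactly the paper's Corollary~\ref{corollary:combine_kernels} (which the paper proves separately in Appendix~\ref{app:combined_kernels} by the same Lagrangian-linearity computation you give), whereas the paper's block-diagonal device keeps Definition~\ref{def:ism_family} as the single membership criterion at the cost of quietly generalizing $\beta$ from a scalar to a matrix. Two small cautions about your version: invoking Theorem~\ref{thm:stationary} ``for $K$'' is circular as stated, since that theorem presupposes ISM membership --- what you actually need (and what your argument supports) is that the \emph{proof} of the theorem re-runs verbatim for $K$; and that proof also uses the second derivative through the Hessian term $\mathcal{T}_1$, which you do not mention explicitly, though it too decomposes linearly across the $f_m''$ and is only ever bounded by a constant, so the eigengap conclusion survives.
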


\summary{The Properties of $\Phi$, which leads into Theorem}
\textbf{Properties of $\Phi$. }Since each ISM kernel is coupled with its own $\Phi$ matrix, $\Phi$s can conceptually replace kernels. Recall that the $V_{\text{max}}$ of $\Phi$ \textit{for any kernel in the ISM family} is the local maximum of Eq.~(\ref{eq:obj_1}). This central property is established in the following two theorems.




\begin{theorem}\label{thm:stationary}
    Given a full rank $\Phi$ with an eigengap as defined by Eq.~(\ref{eq:final_conclusion}) in Appendix~\ref{app:theorem_1_proof}, a fixed point $W^*$ of Algorithm~\ref{alg:ism} satisfies the 2nd Order Necessary Conditions (Theorem 12.5~\cite{wright1999numerical}) for Eq. (\ref{eq:obj_1}) using any ISM kernel.
\end{theorem}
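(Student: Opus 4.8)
The plan is to verify the two halves of the second-order necessary conditions (Theorem~12.5 of~\cite{wright1999numerical}) directly: first-order stationarity (the KKT condition) and negative semidefiniteness of the Lagrangian Hessian on the tangent space of the constraint. First I would attach a symmetric multiplier matrix $\Lambda$ to the orthogonality constraint and form the Lagrangian $\mathcal{L}(W,\Lambda)=\Tr(\Gamma K_{XW})-\Tr(\Lambda(W^TW-I))$. The crucial step --- and the one that carries the whole generalization away from the Gaussian kernel --- is to differentiate the objective using only the $f(\beta)$ structure of Definition~\ref{def:ism_family}. Writing $\Tr(\Gamma K_{XW})=\sum_{i,j}\Gamma_{i,j}f(\beta_{i,j})$ with $\beta_{i,j}=a(x_i,x_j)^TWW^Tb(x_i,x_j)$, the chain rule gives $\nabla_W\beta_{i,j}=A_{i,j}W$, so that $\nabla_W\Tr(\Gamma K_{XW})=\sum_{i,j}\Gamma_{i,j}f'(\beta_{i,j})A_{i,j}W=2\Phi W$, with $\Phi$ exactly the matrix of Eq.~(\ref{eq:def_of_pseudo_kernel}). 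This identity replaces the exponential-specific computation of the original proof and holds for every member of the family at once.

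With the gradient in hand, the first-order condition $\nabla_W\mathcal{L}=2\Phi W-2W\Lambda=0$ reduces to $\Phi W=W\Lambda$. I would then observe that any fixed point $W^*$ of Algorithm~\ref{alg:ism} consists, by construction, of the $q$ most dominant eigenvectors of $\Phi(W^*)$; taking $\Lambda$ to be the diagonal matrix of the corresponding top eigenvalues satisfies $\Phi(W^*)W^*=W^*\Lambda$ exactly, so stationarity holds and $\Lambda$ is identified as the KKT multiplier. The full-rank hypothesis on $\Phi$ guarantees that this multiplier is well defined and that the top-$q$ eigenspace is uniquely determined.

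For the second-order condition I would restrict to tangent directions $Z$ satisfying $W^{*T}Z+Z^TW^*=0$ and evaluate the quadratic form $\langle Z,\nabla^2_{WW}\mathcal{L}[Z]\rangle$. Differentiating $2\Phi(W)W-2W\Lambda$ once more produces two contributions: a ``frozen'' eigenvalue part $2\Tr(Z^T\Phi Z)-2\Tr(Z^TZ\Lambda)$, and a ``reactive'' part $2\Tr\!\big(Z^T(\delta_Z\Phi)W^*\big)$ carrying the $f''(\beta)$ terms from the dependence of $\Phi$ on $W$. I would diagonalize $\Phi(W^*)$ and decompose $Z$ into its vertical and horizontal components relative to $\operatorname{col}(W^*)$. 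The skew constraint makes the frozen part vanish on the vertical component, while on the horizontal component it reduces to $\sum K_{a,b}^2(\lambda_{q+a}-\lambda_b)$, each factor of which is bounded above by $-(\lambda_q-\lambda_{q+1})$; hence the frozen part supplies a strictly negative margin proportional to the gap between the $q$-th and $(q{+}1)$-th eigenvalues.

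The main obstacle is the reactive $f''$ term: unlike the frozen part, it is not automatically sign-definite and is the only place where the kernel's analytic form enters. The plan is to bound its magnitude using the twice-differentiability and symmetric PSD properties guaranteed by Definition~\ref{def:ism_family}, and then to invoke precisely the quantitative eigengap condition of Eq.~(\ref{eq:final_conclusion}) --- chosen so that the strictly negative margin from the frozen eigenvalue part dominates this bound on every tangent direction, forcing $\langle Z,\nabla^2_{WW}\mathcal{L}[Z]\rangle\le 0$ as required for a constrained maximum of Eq.~(\ref{eq:obj_1}). Establishing that this single eigengap inequality suffices uniformly across the whole ISM family, rather than for the Gaussian kernel alone, is the heart of the argument and the step I expect to be most delicate.
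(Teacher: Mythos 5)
Your proposal follows essentially the same route as the paper's own proof: form the Lagrangian, differentiate through the $f(\beta)$ structure to obtain $\Phi W = W\Lambda$ as the first-order condition, decompose tangent directions as $Z = WB + \bar{W}\bar{B}$ with $B$ skew-symmetric so the chosen-eigenvalue terms cancel, and let the eigengap of Eq.~(\ref{eq:final_conclusion}) absorb the $f''(\beta)$ "reactive" term. The only difference is that the final step you flag as most delicate is dispatched more bluntly in the paper: rather than deriving a uniform bound on the reactive term across the family, the paper simply packages it (normalized by $\alpha = \Tr(\bar{B}^T\bar{B})$) as the constant $\mathcal{C}$ appearing in the eigengap hypothesis of the theorem, so the condition is assumed rather than verified.
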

\begin{theorem}\label{thm:convergence}
  A sequence of subspaces $\{W_k W _ k^T\}_{k\in \mathbb{N}}$ generated by Algorithm~\ref{alg:ism} contains a convergent subsequence. 
 \end{theorem}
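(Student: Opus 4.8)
The plan is to recognize that Theorem~\ref{thm:convergence} is, at its core, a compactness statement that does not depend on the particular form of the $\Phi$ update. The only fact I would extract from Algorithm~\ref{alg:ism} is that every iterate $W_k$ is assembled from orthonormal eigenvectors, so $W_k^T W_k = I$. Setting $P_k = W_k W_k^T$, each $P_k$ is then a symmetric idempotent matrix, since $P_k^T = P_k$ and $P_k^2 = W_k (W_k^T W_k) W_k^T = W_k W_k^T = P_k$, and its trace is $\Tr(W_k W_k^T) = \Tr(W_k^T W_k) = \Tr(I) = q$. Hence each $P_k$ is an orthogonal projection onto a $q$-dimensional subspace, and every iterate lives in the fixed set
\[
\mathcal{G} = \{ P \in \mathbb{R}^{d \times d} : P = P^T,\ P^2 = P,\ \Tr(P) = q \},
\]
the set of rank-$q$ orthogonal projections, i.e.\ the Grassmann manifold $\mathrm{Gr}(q,d)$ in its projection embedding.

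Next I would show that $\mathcal{G}$ is compact as a subset of the finite-dimensional space $\mathbb{R}^{d \times d}$ with the Frobenius norm, so that Heine--Borel applies. Boundedness is immediate: for any $P \in \mathcal{G}$ we have $\|P\|_F^2 = \Tr(P^T P) = \Tr(P^2) = \Tr(P) = q$, so every element has Frobenius norm exactly $\sqrt{q}$. For closedness, I would take any sequence $P_n \in \mathcal{G}$ with $P_n \to P$; since transposition, matrix multiplication, and trace are continuous, the limit inherits $P = P^T$, $P^2 = P$, and $\Tr(P) = q$, so $P \in \mathcal{G}$. Being closed and bounded in a finite-dimensional normed space, $\mathcal{G}$ is compact, hence sequentially compact.

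Finally, since $\{P_k\}_{k \in \mathbb{N}}$ is a sequence lying entirely in the sequentially compact set $\mathcal{G}$, the Bolzano--Weierstrass property supplies a subsequence $\{P_{k_m}\}$ converging to some $P^* \in \mathcal{G}$, which is exactly the claimed convergent subsequence of $\{W_k W_k^T\}$. It is worth emphasizing that the statement only asks for a convergent \emph{subsequence}, so I need no monotonicity of the objective nor any property of the fixed point from Theorem~\ref{thm:stationary}; pure compactness of the Grassmannian suffices.

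I do not expect a genuine obstacle here, as the result is a standard consequence of the compactness of $\mathrm{Gr}(q,d)$. The one point requiring care is the closedness step, namely ensuring that the rank cannot drop in the limit. This is precisely why I would encode the rank constraint through the trace rather than through $\mathrm{rank}(P) = q$ directly: the latter defines a non-closed set (rank is only lower semicontinuous), whereas $\Tr(P_n) = q$ passes to the limit by continuity, and for a projection $\Tr(P) = \mathrm{rank}(P)$, so the limiting projection retains rank exactly $q$. The trace reformulation is therefore the small but essential device that keeps the compactness argument clean.
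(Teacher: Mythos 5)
Your proof is correct, and it reaches the same conclusion as the paper via the same underlying idea---compactness of the Grassmannian forces a convergent subsequence of $\{W_k W_k^T\}$---but you establish that compactness by a genuinely different and more self-contained route. The paper argues abstractly: the Grassmannian is the quotient of the (compact) Stiefel manifold by the orthogonal group, inherits compactness and a metric from it, and compact metric spaces are sequentially compact; it then notes that while the frame $W_k$ need not converge, the subspace $W_k W_k^T$ does. You instead work entirely inside $\mathbb{R}^{d\times d}$: you identify the iterates with rank-$q$ orthogonal projections, verify boundedness explicitly ($\|P\|_F = \sqrt{q}$), verify closedness by continuity of transpose, product, and trace, and invoke Heine--Borel plus Bolzano--Weierstrass. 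What your version buys is rigor at a point the paper glosses over: the paper's "inherits compactness" is asserted rather than checked, whereas your trace device ($\Tr(P) = \mathrm{rank}(P)$ for projections, so the rank constraint becomes a closed condition even though rank itself is only lower semicontinuous) is exactly the detail needed to make the projection-embedding picture airtight, and it requires no differential-geometric machinery at all. What the paper's quotient formulation buys, conversely, is the explicit observation that the algorithm's termination criterion is frame-independent, which connects the convergence statement to the eigenvalue-based stopping rule of Algorithm~1; your proof does not need this remark for the theorem as stated, but it is the reason the paper phrases convergence in terms of $W_k W_k^T$ rather than $W_k$ in the first place.
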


\summary{A quick description of the new generalized ISM proof, and how we differ.}
Since the entire ISM proof along with its convergence guarantee is required to be revised and generalized under Definition~\ref{def:ism_family}, we leave the detail to Appendix~\ref{app:theorem_1_proof} and \ref{app:convergence} while presenting here only the main conclusions. Functionally, our proof is separated into two lemmas to establish $\Phi$ as a kernel surrogate. Lemma 1 concludes that given any $\Phi$ of an ISM kernel, the gradient of the Lagrangian for Eq.~(\ref{eq:obj_1}) is equivalent to $-\Phi W - W \Lambda$. Therefore, when the gradient is set to 0, the eigenvectors of $\Phi$ is equivalent to the stationary point of Eq.~(\ref{eq:obj_1}). For Lemma 2, given $\bar{\Lambda}$ as the eigenvalues associated with the eigenvectors \textit{not chosen} and $\mathcal{C}$ as constant, it concludes that the 2nd order necessary condition is satisfied when $(\min_i  \bar{\Lambda}_i - \max_j \Lambda_j) \ge \mathcal{C}.$  This inequality indicates the necessity for the smallest eigenvalue among the un-chosen eigenvectors to be greater than the maximum eigenvalue of the chosen by at least $\mathcal{C}$.  Therefore, given the choice of $q$ eigenvectors, the $q$ smallest eigenvalues will maximize the gap. This is equivalent to finding the most dominant eigenvectors of $\Phi$. Putting both lemmas together, we conclude that the most dominant eigenvectors of any $\Phi$ within the ISM family is the solution to Eq.~(\ref{eq:obj_1}).

\summary{How ISM is initialized}
\textbf{Initializing $W$ with $\Phi_0$.} After generalizing ISM, different $\Phi$s may or may not be a function of $W$. When $\Phi$ is not a function of $W$, the $V_{\text{max}}$ of $\Phi$ is immediately the solution. However, if $\Phi$ is a function of $W$, $\Phi$ iteratively updates from the previous $W$. This process is initialized using a $\Phi_0$ that is independent of $W$. To obtain $\Phi_0$, ISM approximates the Gaussian kernel up to the 2nd order of the Taylor series around $\beta=0$ and discovers that the approximation of $\Phi$ is independent of $W$. Our work leverages Definition~\ref{def:ism_family} and proves that a common formulation for $\Phi_0$ is possible. We formalize our finding in the following theorem and provided the proof in Appendix~\ref{app:approx_phi}.

\begin{theorem}
    \label{prop:initial_pseudo_kernel}
    For any kernel within the ISM family, a $\Phi$ independent of $W$ can be approximated with
    \begin{align} 
    \Phi \approx \sign(\nabla_{\beta} f(0)) \sum_{i,j} \Gamma_{i,j} A_{i,j}.
    \end{align} 
\end{theorem}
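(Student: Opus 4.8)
The plan is to start from the closed form of $\Phi$ supplied by Definition~\ref{def:ism_family} and strip out its $W$-dependence by a leading-order expansion of $f'$ about the origin $\beta=0$, then argue that only the \emph{sign} of the leading coefficient survives once we pass to the dominant eigenvectors. Concretely, recall $\Phi=\frac12\sum_{i,j}\Gamma_{i,j}f'(\beta)A_{i,j}$, where the only factor carrying a $W$-dependence is $f'(\beta)$ through $\beta=a(x_i,x_j)^T WW^T b(x_i,x_j)$; the matrices $A_{i,j}$ and the scalars $\Gamma_{i,j}$ are fixed data. Since the kernel is twice differentiable, $f'$ is continuous, and at initialization there is no prior iterate, so the natural expansion point is $WW^T\to 0$, i.e.\ $\beta\to 0$. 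Replacing $f'(\beta)$ by its value at the expansion point, $f'(\beta)\approx f'(0)=\nabla_\beta f(0)$, decouples the summand from $W$.

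Substituting this constant into the sum gives $\Phi\approx \frac{f'(0)}{2}\sum_{i,j}\Gamma_{i,j}A_{i,j}$, which is manifestly independent of $W$. Because Algorithm~\ref{alg:ism} uses $\Phi_0$ only through its most dominant eigenvectors $V_{\text{max}}$, and the eigenvectors of a symmetric matrix are unchanged when it is scaled by a positive constant, the positive factor $\tfrac12|f'(0)|$ may be discarded. Scaling by a negative constant, however, reverses the ordering of eigenvalues and hence swaps the most dominant eigenvectors for the least dominant ones, so the sign $\sign(f'(0))=\sign(\nabla_\beta f(0))$ must be retained. This yields exactly $\Phi\approx \sign(\nabla_\beta f(0))\sum_{i,j}\Gamma_{i,j}A_{i,j}$, and it is well defined whenever $f'(0)\neq 0$, which holds for every kernel in Table~\ref{table:kernels}.

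I would then verify the formula against Table~\ref{table:init_phis} by specializing $a,b,f$ per kernel. For dot-product kernels ($a=x_i$, $b=x_j$), the identity $\sum_{i,j}\Gamma_{i,j}(x_i x_j^T+x_j x_i^T)=2X^T\Gamma X$ recovers $\Phi_0=X^T\Gamma X$ for the linear and polynomial kernels (both with $f'(0)>0$). For translation-invariant kernels ($a=b=x_i-x_j$), the graph-Laplacian identity $\sum_{i,j}\Gamma_{i,j}(x_i-x_j)(x_i-x_j)^T=2X^T\mathcal{L}_\Gamma X$ gives $\Phi_0=\sign(f'(0))\,X^T\mathcal{L}_\Gamma X$, namely $+X^T\mathcal{L}_\Gamma X$ for the squared and multiquadratic kernels and $-X^T\mathcal{L}_\Gamma X$ for the Gaussian (where $f'(0)=-1/(2\sigma^2)<0$), matching every sign in the table.

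The main obstacle is not the algebra but the justification of the approximation itself. The delicate point is arguing that discarding the higher-order terms of $f'$ (equivalently, all but the linear term of the $2$nd-order Taylor polynomial of $f$) is legitimate for the purpose of \emph{initialization}: these are precisely the terms that would reinsert a $WW^T$ dependence, so they cannot be kept if $\Phi_0$ is to be $W$-free, yet one must argue the retained leading term still orients the first eigenbasis toward the correct basin. The secondary subtlety is the sign bookkeeping above, together with the degenerate case $f'(0)=0$, in which the leading informative term shifts to $f''(0)$ and the single-term formula would need amendment; I would note that this case does not arise for the ISM kernels of interest.
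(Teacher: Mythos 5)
Your proposal is correct and follows essentially the same route as the paper's Appendix~\ref{app:approx_phi}: both expand around $\beta=0$ (equivalently $W=0$), observe that only $\sign(\nabla_\beta f(0))$ matters for eigenvector selection, and verify the result against Table~\ref{table:init_phis} kernel by kernel. The only cosmetic difference is that you freeze $f'(\beta)\approx f'(0)$ directly in the gradient formula $\Phi=\frac12\sum_{i,j}\Gamma_{i,j}f'(\beta)A_{i,j}$, whereas the paper Taylor-expands the objective $f(\beta(W))$ to second order and then differentiates; these coincide precisely because $\nabla_W\beta(0)=0$ kills the $\nabla_{\beta\beta}f$ term (the paper's Appendix~\ref{app:Hessian_proof} computation), so your ``differentiate then expand'' is equivalent to the paper's ``expand then differentiate.''
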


\summary{Extending ISM to Conic Combination of Kernels.}
\textbf{Extending ISM to Conic Combination of Kernels.} 
The two lemmas of Theorem~\ref{thm:stationary} highlights the conceptual convenience of working with $\Phi$ in place of kernels. This conceptual replacement extends even to conic combinations of ISM kernels. As a corollary to Theorem~\ref{thm:stationary}, we discovered that when a kernel is constructed through a conic combination of ISM kernels, it also has an associated $\Phi$ matrix. Remarkably, it is equivalent to the conic combination of $\Phi$s from individual kernels using the same coefficients. Formally, we propose the following corollary with its proof in Appendix~\ref{app:combined_kernels}. 

\begin{corollary}\label{corollary:combine_kernels}
    The $\Phi$ matrix associated with a conic combination of kernels is the conic combination of $\Phi$s associated with each individual kernel.
\end{corollary}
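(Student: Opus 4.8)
The plan is to exploit the linearity of the IKDR objective in the kernel, which propagates directly into the $\Phi$ matrix through Definition~\ref{def:ism_family}. Write the conic combination as $\kappa_X = \sum_m c_m \kappa_X^{(m)}$ with coefficients $c_m \ge 0$ and each $\kappa_X^{(m)}$ an ISM kernel; by Proposition~\ref{thm:linear_combinations_of_kernels} this combination is itself an ISM kernel, so it possesses a $\Phi$ matrix in the sense of Eq.~(\ref{eq:def_of_pseudo_kernel}). The first step is to record the differential structure behind Lemma 1: for a single ISM kernel, differentiating $\Tr(\Gamma K_{XW})$ through $\beta_{i,j} = a(x_i,x_j)^T W W^T b(x_i,x_j)$ yields $\sum_{i,j}\Gamma_{i,j} f'(\beta_{i,j}) A_{i,j} W$, so the matrix assembled from the kernel's first derivative in $\beta$, weighted by $\Gamma$, is exactly $2\Phi$ of Eq.~(\ref{eq:def_of_pseudo_kernel}) (which Lemma 1 connects to the Lagrangian gradient $-\Phi W - W\Lambda$).

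Second, because each entry of the combined kernel evaluated on $XW$ is $(K_{XW})_{i,j} = \sum_m c_m f^{(m)}(\beta_{i,j}^{(m)})$, with each constituent carrying its own $f^{(m)}$, $\beta^{(m)}$, and $A^{(m)}$, differentiation acts term by term. I would therefore compute the combined $\Phi$ directly from the summand-level form of Eq.~(\ref{eq:def_of_pseudo_kernel}), rather than from the gradient-times-$W$ expression, obtaining
\begin{align}
\Phi = \tfrac{1}{2}\sum_{i,j}\Gamma_{i,j}\sum_m c_m (f^{(m)})'(\beta_{i,j}^{(m)}) A_{i,j}^{(m)} = \sum_m c_m \left[\tfrac{1}{2}\sum_{i,j}\Gamma_{i,j}(f^{(m)})'(\beta_{i,j}^{(m)}) A_{i,j}^{(m)}\right] = \sum_m c_m \Phi^{(m)},
\end{align}
which is exactly the claimed conic combination of the individual $\Phi^{(m)}$ with the same coefficients. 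Working at the summand level is deliberate: it yields a genuine matrix identity, sidestepping the fact that $\Phi W = (\sum_m c_m \Phi^{(m)}) W$ alone would not force $\Phi = \sum_m c_m \Phi^{(m)}$, since $W$ is tall ($q < d$) and hence not left-invertible.

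The main obstacle is reconciling this per-kernel decomposition with Definition~\ref{def:ism_family}, which presents an ISM kernel through a single triple $(f,a,b)$: when the constituents have different $a^{(m)}, b^{(m)}$ (e.g. $x_i$ versus $x_i - x_j$), the combined kernel admits no single shared $\beta$, so Eq.~(\ref{eq:def_of_pseudo_kernel}) must be read as aggregating the per-constituent contributions. I would resolve this by invoking the construction in the proof of Proposition~\ref{thm:linear_combinations_of_kernels} to identify the combined kernel's derivative structure with the aggregate $\sum_m c_m (f^{(m)})'(\beta^{(m)}) A^{(m)}$, after which the matrix identity above follows by linearity of differentiation. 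Finally, since each $c_m \ge 0$ preserves symmetry and positive semi-definiteness, the combined $\Phi$ inherits the surrogate role established in Theorem~\ref{thm:stationary}, so its dominant eigenvectors solve Eq.~(\ref{eq:obj_1}) for the combined kernel.
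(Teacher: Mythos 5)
Your proposal is correct, but it reaches the corollary by a genuinely different route than the paper. The paper's proof (Appendix~\ref{app:combined_kernels}) never invokes Definition~\ref{def:ism_family} for the combined kernel at all: it writes the combined objective $-\Tr\left(\Gamma[\mu_1 K_1+\dots+\mu_m K_m]\right)$, splits the trace by linearity, applies Lemma~\ref{basic_lemma} to each summand separately --- so each constituent keeps its own $(f,a,b)$ and no shared $\beta$ is ever needed --- and then reads off the combined $\Phi$ from the resulting stationarity equation $\frac{1}{m}\left[-\mu_1\Phi_1-\dots-\mu_m\Phi_m\right]W = W\Lambda$; operationally, the ``associated $\Phi$'' is simply the matrix governing this eigen-equation. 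You instead prove the stronger, $W$-pointwise matrix identity $\Phi=\sum_m c_m\Phi^{(m)}$ directly from the summand-level form of Eq.~(\ref{eq:def_of_pseudo_kernel}), and your remark that $\Phi W=\left(\sum_m c_m\Phi^{(m)}\right)W$ alone would not force a matrix identity (since $W$ is $d\times q$ with $q<d$, hence not left-invertible) is a fair structural point that the paper's gradient-based argument silently elides. The price of your route is exactly the obstacle you identify: Definition~\ref{def:ism_family} packages an ISM kernel as a single triple $(f,a,b)$, so for heterogeneous $a^{(m)},b^{(m)}$ you must import the block-diagonal $\beta$ construction from the proof of Proposition~\ref{thm:linear_combinations_of_kernels} (Appendix~\ref{app:linear_combinations_of_kernels}) to make the summand-level reading of Eq.~(\ref{eq:def_of_pseudo_kernel}) legitimate --- a step whose rigor is only as good as that construction, though it matches the paper's own level of formality. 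The paper's termwise-differentiation route sidesteps the heterogeneous-$\beta$ issue entirely, at the cost of characterizing $\Phi$ only through the first-order condition (and with a slightly awkward $m$-scaled multiplier term in its Lagrangian, which your argument avoids). Both proofs ultimately rest on linearity of the trace and of differentiation, so they agree in substance; yours buys a genuine matrix identity plus an explicit treatment of the mixed-$(a,b)$ case, while the paper's buys brevity and direct alignment with how $\Phi$ is actually used in Algorithm~\ref{alg:ism}.
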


\textbf{Complexity analysis. } 
    Let $t$ be the number of iterations required for convergence and $n \gg d$, ISM's time complexity is dominated by the dot product between $\mathcal{L} \in \mathcal{R}^{n \times n}$ and $X \in \mathcal{R}^{n \times d}$. Together ISM has a time complexity of $O(n^2dt)$; a significant improvement from DG $O(n^2dq^2t)$, or SM at $O(n^2dqt)$. ISM is also faster since $t$ is significantly smaller. While $t$ ranges from hundreds to thousands for competing algorithms, ISM normally converges at $t<5$.  In terms of memory, ISM faces similar challenges as all kernel methods where the memory complexity is upper bounded at $O(n^2)$.

\section{Experiments}

 \textbf{Datasets.}
    The experiment includes 5 real datasets of commonly encountered data types. Wine~\cite{Dua:2017} consists of continuous data while the Cancer dataset ~\cite{breastcancer} features are discrete. The Face dataset ~\cite{bay2000uci} is a standard dataset used for alternative clustering; it includes images of 20 people in various poses. The MNIST~\cite{deng2012mnist} dataset includes images of handwritten characters. The Face and the MNIST datasets are chosen to highlight ISM's ability to handle images. The Flower image by Alain Nicolas \cite{particleoft} is another dataset chosen for alternative clustering where we seek alternative ways to perform image segmentation. For more in-depth details on each dataset, see Appendix~\ref{app:data_detail}.
    
 \textbf{Experimental Setup.}
    We showcase ISM's efficacy on three different learning paradigms, i.e., supervised dimension reduction\cite{masaeli2010transformation}, unsupervised clustering \cite{niu2011dimensionality}, and semi-supervised alternative clustering \cite{chang2017clustering}. As an optimization technique, we compare ISM in Table~\ref{table:result_table} against competing state-of-the-art manifold optimization algorithms: Dimension Growth \textbf{(DG)} \cite{niu2014iterative}, the Stiefel Manifold approach \textbf{(SM)} \cite{wen2013feasible}, and the Grassmann Manifold \textbf{(GM)} \cite{boumal2011rtrmc,manopt}. To emphasize ISM family of kernels, the supervised and unsupervised results using several less conventional kernels are included in Table~\ref{table:other_kernels_too}. Within this table, we also investigate using conic combination of $\Phi$s by combining the Gaussian and the polynomial kernels with center alignment~\cite{cortes2012algorithms}. Since center alignment is specific to supervised cases, this is not repeated for the unsupervised case.

    For \textit{supervised} dimension reduction, we perform SVM on $XW$ using 10-fold cross validation. \chieh{For each of the 10-fold experiments, we trained $W$ and the SVM classifier only on the training set while reporting the result only on the test set, i.e., the test set was never used during the training.} We repeat this process for each fold of cross-validation. From the 10-fold results in Table~\ref{table:result_table}, we record the mean and the standard deviation of the run-time, cost, and accuracy. We investigate the scalability in Figure~\ref{fig:scalability} by comparing the change in run-time as we increment the sample size. For \textit{unsupervised} dimension reduction, we perform spectral clustering on $XW$ after learning $W$ where we record the run-time, cost, and NMI. For \textit{alternative clustering}, we highlight the ISM family of kernels by reproducing the original ISM results (generated with Gaussian kernel) using the polynomial kernel. On the Flower image, each sample is a $\mathcal{R}^3$ vector. We supply the original image segmentation result as semi-supervised labels and learn an alternative way to segment the image. The original segmentation and the alternative segmentation are shown in Figure\ref{fig:alt_cluster}. For the Face dataset, each sample is a vector vectorized from a grayscaled image of individuals. We provide the identity of individuals as the original clustering label and search for an alternative way to cluster the data.

 \textbf{Evaluation Metric.}
    In the supervised case, the test classification accuracy from the 10-fold cross validation is recorded along with the cost and run-time. The time is broken down into days (d), hours (h), minutes (m), and seconds (s). The best results are bold for each experiment. In the unsupervised case, we report the Normalized Mutual Information (NMI) \cite{strehl2002cluster} to compare the clustering labels against the ground truth. For detail on how NMI is computed, see Appendix~\ref{app:NMI_computation}.
    

 \textbf{Experiment Settings. }
    The median of the pair-wise Euclidean distance is used as $\sigma$ for all experiments using the Gaussian kernel. Degree of 3 is used for all polynomial kernels. The dimension of subspace $q$ is set to the number of classes/clusters. The convergence threshold $\delta$ is set to 0.01. All competing algorithms use their default initialization. All datasets are centered to 0 and scaled to a standard deviation of 1. All sources are written in Python using Numpy and Sklearn \cite{numpy,sklearn_api}. All experiments were conducted on Dual Intel Xeon E5-2680 v2 @ 2.80GHz, with 20 total cores. Due to limited computational resources, each run is limited to 3 days.

\textbf{Complexity Analysis of Competing Methods. }    
    The run-time as a function of linearly increasing sample size is shown for the polynomial kernel in Figure~\ref{fig:scalability}. Since the complexity analysis for ISM suggests a relationship of $O(n^2)$ with respect to the sample size, $log_2(.)$ is used for the $Y$-axis. As expected, the ISM's linear run-time growth in Figure~\ref{fig:scalability} supports our analysis of $O(n^2)$ relationship. The plot for competing algorithms reported a similar linear relationship with comparable slopes. This indicates that the difference in speed is not a function of the data size, but other factors such as $q$ and $t$. Using \textbf{DG}'s complexity of $O(n^2dq^2t)$ as an example, it normally converges when $t$ is in the ranges of thousands. Since $q=20$ was used in the figure, the significant speed improvement from ISM can be derived from the $q^2t$ factor since ISM generally converges at $t$ less than 5.

\summary{ISM is a faster optimization technique that produces a lower cost. }
\textbf{Results. }
    Comparing against other optimization algorithms in  Table~\ref{table:result_table}, the results confirm ISM as a significantly faster algorithm while consistently achieving a lower cost. This disparity is especially prominent when the data dimension $q$ is higher. We highlight that for the Face dataset on the Gaussian kernel, it took DG 1.92 days, while ISM finished within 0.99 seconds: a $10^5$-fold speed difference. To further confirm these advantages, the same experiment is repeated using the \textit{polynomial kernel} where similar results can be observed. Besides the execution time and cost, the classification accuracy across 5 datasets never falls below 95\% in the supervised setting. The same datasets and techniques are repeated in an unsupervised clustering problem. While the clustering quality is comparable across the datasets, ISM clearly produces the lowest cost with the fastest execution time.

\summary{Table~\ref{table:other_kernels_too} shows how we generalize ISM to other kernels.}
    Table~\ref{table:other_kernels_too} focuses on the generalization of ISM to a family of kernels. Since Table~\ref{table:result_table} already supplied results from the Gaussian and polynomial kernel, we feature 4 more kernels to support the claim. As kernel methods treat kernels as interchangeable components of the algorithm, ISM achieves a similar effect by replacing the $\Phi$ matrix. As evidenced from the table, similar accuracy and time can be achieved with this replacement without affecting the rest of the algorithm. In many cases, the multiquadratic kernel outperforms even the Gaussian and the polynomial kernel. In a similar spirit, we repeated the same experiments in the unsupervised case and received further confirmation. 
   
\summary{We have even shown that a combination $\Phi$s can be used in place of kernels.}
    To support Corollary~\ref{corollary:combine_kernels}, results using a Gaussian + polynomial (G+P) kernel is also supplied in Table~\ref{table:other_kernels_too}. It is not surprising that a combination of $\Phi$s is the best performing kernel. Since the union of the two kernels covers a larger feature space, the expressiveness is also greater. This result supports the claim that a conic combination of $\Phi$s can replace the same combination of kernels for Eq.~(\ref{eq:obj_1}).

\summary{For semi-supervised, we reproduced the original alternative clustering results using the polynomial kernels instead of the gaussian kernel.}
    To study the generalized ISM on a (semi-supervised) alternative clustering problem, we use it to recreate the results from the original paper on alternative clustering. We emphasize that our results differ in the choice of using the polynomial kernel instead of the Gaussian. From the Flower experiment, it is visually clear that the original image segmentation of 2 clusters (separated by black and white) is completely different from the alternative segmentation. For the Face data, the original clusters were grouped by the identity of the individuals while the algorithm produced 4 alternative clusters. By averaging the images of each alternative cluster, the new clustering pattern can be visually seen in Figure~\ref{fig:alt_cluster}; the samples are alternatively clustered by the pose.

    \begin{figure}[h]
      \begin{subfigure}[b]{0.5\textwidth}
        \includegraphics[width=\textwidth,height=6cm]{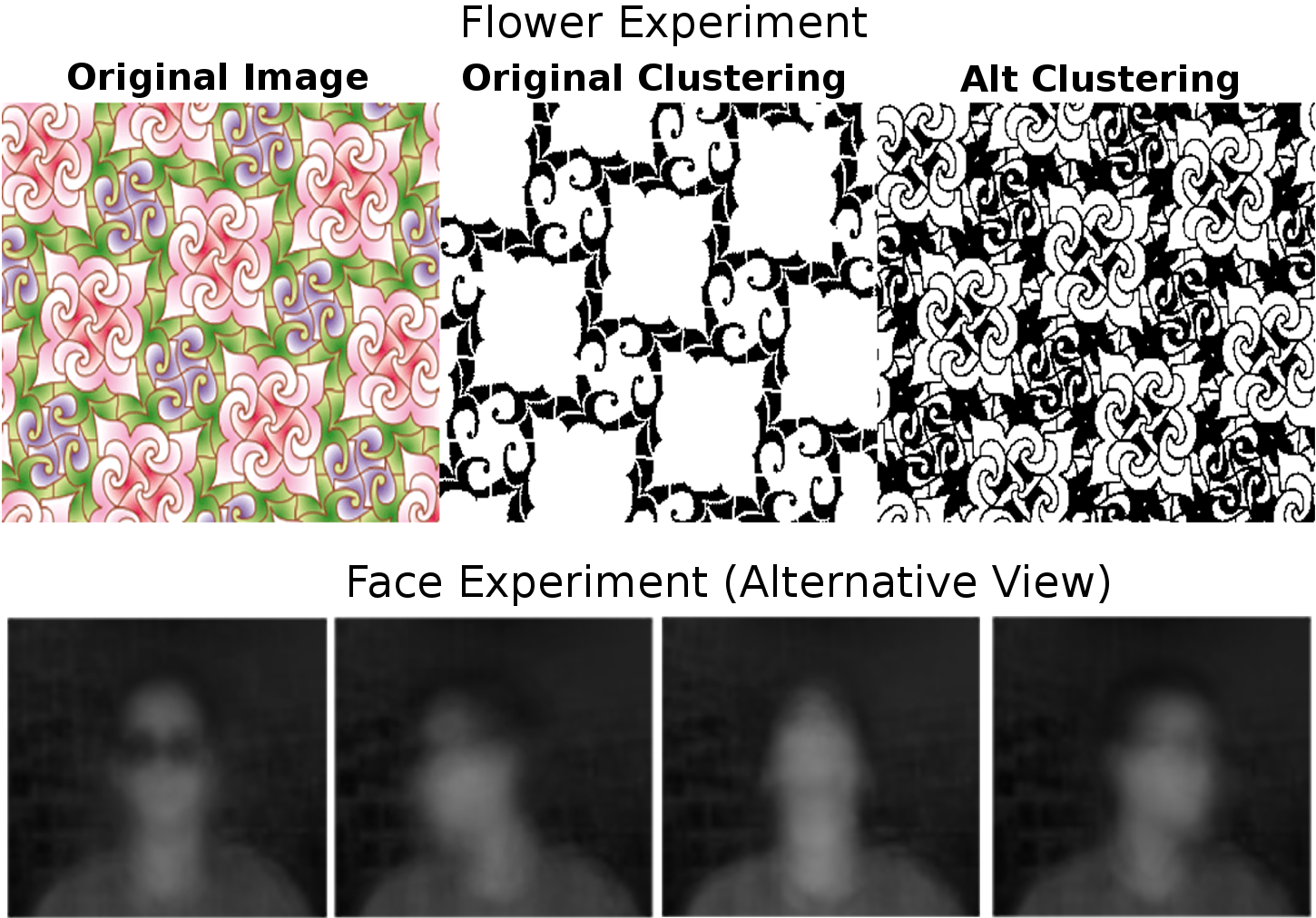}
        \caption{Reproducing results from the original ISM paper using polynomial kernels.}
        \label{fig:alt_cluster}
      \end{subfigure}
      \begin{subfigure}[b]{0.5\textwidth}
        \includegraphics[width=\textwidth,height=6cm]{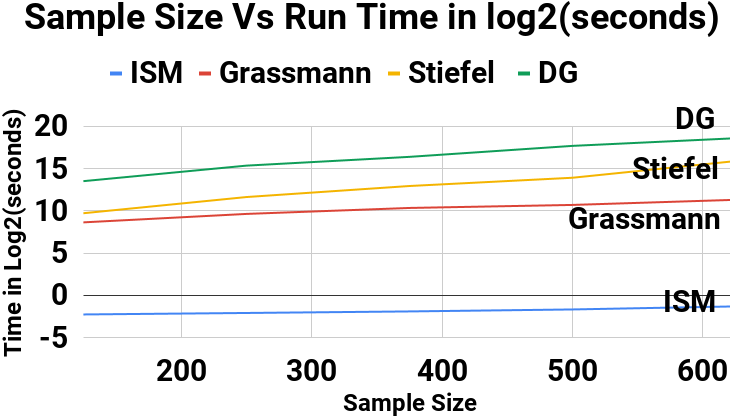}
        \caption{Log2 run-time as a function of increasing samples.}
        \label{fig:scalability}
      \end{subfigure}
    \end{figure}

    \begin{table}[h]
        \tiny
        \centering
        \setlength{\tabcolsep}{3.0pt}
        \renewcommand{\arraystretch}{1.2}
      \begin{tabular}{|cc|c|c|c|c|c|c|c|c|c|}
        \hline
          \multicolumn{2}{|c|}{\textbf{Supervised}} &
          \multicolumn{4}{|c|}{\textbf{Gaussian}} &
          \multicolumn{4}{|c|}{\textbf{polynomial}} \\
        \cline{3-10}
        & & ISM & DG & SM & GM & 
        ISM & DG & SM & GM \\
        \Xhline{2\arrayrulewidth}
        \parbox[t]{1mm}{\multirow{3}{*}{\rotatebox[origin=c]{90}{\textbf{Wine}}}}&
            \textbf{Time} & 
                \textbf{0.02s} $\pm$ \textbf{0.01s} & 
                7.9s $\pm$ 2.9s & 
                1.7s $\pm$ 0.7s & 
                16.8m $\pm$ 3.4s & 
                \textbf{0.02s} $\pm$ \textbf{0.0s} & 
                13.2s $\pm$ 6.2s &
                14.77s $\pm$ 0.6s &
                16.82m $\pm$ 3.6s  \\
        &
        \textbf{Cost} & 
            \textbf{-1311} $\pm$ \textbf{26}& 
            -1201 $\pm$ 25& 
            -1310 $\pm$ 26& 
            -1307 $\pm$ 25 & 
            \textbf{-114608} $\pm$ \textbf{1752} & 
            -112440 $\pm$ 1719 & 
            -111339 $\pm$ 1652& 
            -108892 $\pm$ 1590 \\
        &
        \textbf{Accuracy} & 
            \textbf{95.0\%} $\pm$ \textbf{5\%} & 
            93.2\% $\pm$ 5.5\% & 
            \textbf{95\%} $\pm$ \textbf{4.2\%} & 
            \textbf{95\%} $\pm$ \textbf{6\%} & 
            \textbf{97.2\%} $\pm$ \textbf{3.7\%} & 
            93.8\% $\pm$ 3.9\% & 
            96.6\% $\pm$ 3.7\% & 
            96.6\% $\pm$ 2.7\% \\
         \hline
        \parbox[t]{1mm}{\multirow{3}{*}{\rotatebox[origin=c]{90}{\textbf{Cancer}}}}&
            \textbf{Time} & 
                \textbf{0.08s} $\pm$ \textbf{0.0s} & 
                4.5m $\pm$ 103s & 
                17s $\pm$ 12s & 
                17.8m $\pm$ 80s & 
                \textbf{0.13s} $\pm$ \textbf{0.0s} & 
                4m $\pm$ 1.2m &
                3.3m $\pm$ 3s &
                17.5m $\pm$ 1.1m \\
        &
        \textbf{Cost} & 
            \textbf{-32249} $\pm$ \textbf{338} & 
            -30302 $\pm$ 2297 & 
            -31996 $\pm$ 499 & 
            -30998 $\pm$ 560& 
            \textbf{-1894} $\pm$ \textbf{47} & 
            -1882 $\pm$ 47 & 
            -1737 $\pm$ 84 & 
            -1690 $\pm$ 108 \\                  
        &
        \textbf{Accuracy} & 
            97.3\%$\pm$ 0.3\% & 
            97.3\%$\pm$ 0.3\% & 
            97.3\%$\pm$ 0.2\% & 
            \textbf{97.4\%}$\pm$ \textbf{0.4\%} & 
            \textbf{97.4\%}$\pm$ \textbf{0.3\%} & 
            97.3\% $\pm$ 0.3\% & 
            \textbf{97.4\%} $\pm$ \textbf{0.3\%} & 
            97.3\% $\pm$ 0.3\% \\
        \hline
        \parbox[t]{1mm}{\multirow{3}{*}{\rotatebox[origin=c]{90}{\textbf{Face}}}}&
            \textbf{Time} & 
                \textbf{0.99s} $\pm$ \textbf{0.1s} & 
                1.92d $\pm$ 11h & 
                10s $\pm$ 5s & 
                22.7m $\pm$ 18s & 
                \textbf{0.7s} $\pm$ \textbf{0.03s} & 
                2.1d $\pm$ 13.9h &
                5.0m $\pm$ 5.7s &
                21.5m $\pm$ 9.8s\\
        &
        \textbf{Cost} & 
            \textbf{-3754} $\pm$ \textbf{31} & 
            -3431 $\pm$ 32 & 
            -3749 $\pm$ 33 & 
            -771 $\pm$ 28 & 
            \textbf{-82407} $\pm$ \textbf{1670} & 
            -78845 $\pm$ 1503 & 
            -37907 $\pm$ 15958 & 
            -3257 $\pm$ 517\\                  
        &
        \textbf{Accuracy} & 
            \textbf{100\%} $\pm$ \textbf{0\%} & 
            \textbf{100\%} $\pm$ \textbf{0\%} & 
            \textbf{100\%} $\pm$ \textbf{0\%} & 
            99.2\% $\pm$ 0.2\% & 
            \textbf{100\%} $\pm$ \textbf{0\%} & 
            \textbf{100\%} $\pm$ \textbf{0\%} & 
            \textbf{100\%} $\pm$ \textbf{0\%} & 
            99.8\% $\pm$ 0.2\% \\
         \hline
         \parbox[t]{1mm}{\multirow{3}{*}{\rotatebox[origin=c]{90}{\textbf{MNIST}}}}&
            \textbf{Time} & 
                \textbf{13.8s $\pm$ 2.3s} &
                > 3d &
                2.5m $\pm$ 1.0s &
                > 3d &
                \textbf{12.1s $\pm$ 1.4s} & 
                > 3d &
                2.1m $\pm$ 3s &
                > 3d \\
        &
        \textbf{Cost} & 
            \textbf{-639 $\pm$ 2.3} &
            N/A &
            -621 $\pm$ 5.1 &
            N/A &
            \textbf{-639 $\pm$ 2} & 
            N/A &
            -620 $\pm$ 5.1 &
            N/A \\
        &
        \textbf{Accuracy} & 
            \textbf{99\%} $\pm$ \textbf{0\%} &
            N/A & 
            98.5\% $\pm$ 0.4\% & 
            N/A & 
            \textbf{99\%} $\pm$ \textbf{0\%} &
            N/A & 
            \textbf{99\%} $\pm$ \textbf{0\%} &
            N/A \\
        \Xhline{2\arrayrulewidth}
         \multicolumn{2}{|c|}{\textbf{Unsupervised}} \\
        \Xhline{2\arrayrulewidth}
         \parbox[t]{1mm}{\multirow{3}{*}{\rotatebox[origin=c]{90}{\textbf{Wine}}}}&
            \textbf{Time} & 
                \textbf{0.01s} & 
                9.9s & 
                0.6s & 
                16.7m & 
                \textbf{0.02s} & 
                14.4s &
                2.9s &
                33.5m \\
        &
        \textbf{Cost} & 
            \textbf{-27.4} & 
            -25.2 & 
            -27.3 & 
            -27.3 & 
            \textbf{-1600} & 
            -1582 & 
            -1598 & 
            -1496 \\
        &
        \textbf{NMI} & 
            \textbf{0.86} & 
            \textbf{0.86} & 
            \textbf{0.86} & 
            \textbf{0.86} & 
            \textbf{0.84} & 
            \textbf{0.84} & 
            \textbf{0.84} & 
            0.83 \\
        \hline
          \parbox[t]{1mm}{\multirow{3}{*}{\rotatebox[origin=c]{90}{\textbf{Cancer}}}}&
            \textbf{Time} & 
                \textbf{0.57s} & 
                4.3m & 
                3.9s & 
                44m & 
                \textbf{0.5s} & 
                8.0m &
                8.8m &
                41m \\
        &
        \textbf{Cost} & 
            \textbf{-243} & 
            -133 & 
            -146 & 
            -142 & 
            \textbf{-15804} & 
            -14094& 
            -15749& 
            -11985\\                  
        &
        \textbf{NMI} & 
            \textbf{0.8} & 
            0.79 & 
            \textbf{0.8} & 
            0.79 & 
            0.79 & 
            \textbf{0.80} & 
            0.79 & 
            \textbf{0.80} \\
         \hline
          \parbox[t]{1mm}{\multirow{3}{*}{\rotatebox[origin=c]{90}{\textbf{Face}}}}&
            \textbf{Time} & 
                \textbf{0.3s} & 
                1.3d & 
                5.3s & 
                55.9m & 
                \textbf{1.0s} & 
                > 3d &
                22m &
                1.6d\\
        &
        \textbf{Cost} & 
            \textbf{-169.3} & 
            -167.7 & 
            -168.9 & 
            -37 & 
            \textbf{-368} & 
            NA & 
            -348 & 
            -321 \\
        &
        \textbf{NMI} & 
            0.94 & 
            \textbf{0.95} & 
            0.93 & 
            0.89 & 
            \textbf{0.94} & 
            N/A & 
            0.89 & 
            0.89 \\
          \hline
         \parbox[t]{1mm}{\multirow{3}{*}{\rotatebox[origin=c]{90}{\textbf{MNIST}}}}&
            \textbf{Time} & 
                \textbf{1.8h} & 
                > 3d &
                1.3d &
                > 3d &
                \textbf{8.3m} &
                > 3d &
                0.9d &
                > 3d \\
        &
        \textbf{Cost} & 
            \textbf{-2105} &
            N/A &
            -2001 &
            N/A &
            \textbf{-51358} &
            N/A &
            -51129 &
            N/A \\
        &
        \textbf{NMI} & 
            \textbf{0.47} &
            N/A &
            0.46 &
            N/A &
            \textbf{0.32} &
            N/A &
            \textbf{0.32} &
            N/A \\
         \Xhline{2\arrayrulewidth}
      \end{tabular}
      \caption{Run-time, cost, and objective performance are recorded under supervised/unsupervised objectives. ISM is significantly faster compared to other optimization techniques while achieving lower objective cost.}
    \label{table:result_table}
    \end{table}

    \begin{table}[h]
        \tiny
        \centering
        \setlength{\tabcolsep}{4.0pt}
        \renewcommand{\arraystretch}{1.5}
      \begin{tabular}{|cc|c|c|c|c|c|c|c|c|c|c|}
        \hline
       & &
       \multicolumn{4}{|c|}{\textbf{Supervised}} &
       \multicolumn{4}{|c|}{\textbf{Unsupervised}} \\
        \hline
        & & \textbf{Linear} & \textbf{Squared} 
        & \textbf{Multiquad} & \textbf{G+P} 
        & & \textbf{Linear} & \textbf{Squared} & \textbf{Multiquad}  \\
        \Xhline{2\arrayrulewidth}
        \parbox[t]{1mm}{\multirow{2}{*}{\rotatebox[origin=c]{90}{\textbf{Wine}}}}&
            \textbf{Time} & 
                \textbf{0.003s $\pm$ 0s} & 
                0.01s $\pm$ 0s & 
                0.02s $\pm$ 0.01s & 
                0.007s $\pm$ 0s  &
            \textbf{Time} & 
                \textbf{0.02s} &
                0.04s &
                0.06s \\
        &
        \textbf{Accuracy} & 
            97.2\% $\pm$ 2.8\% & 
            96.6\% $\pm$ 3.7\% & 
            97.2\% $\pm$ 3.7\% & 
            \textbf{98.3\% $\pm$ 2.6\%} &
        \textbf{NMI} &
            0.85 &
            0.85 &
            \textbf{0.88} \\
         \hline
        \parbox[t]{1mm}{\multirow{2}{*}{\rotatebox[origin=c]{90}{\textbf{Cancer}}}}&
            \textbf{Time} & 
                \textbf{0.02s $\pm$ 0.002s} & 
                0.09s $\pm$ 0.02s & 
                0.15s $\pm$ 0.01s & 
                0.06s $\pm$ 0.004s &
            \textbf{Time} & 
                \textbf{0.23s} &
                0.5s &
                0.56s \\
        &
        \textbf{Accuracy} & 
            97.2\% $\pm$ 0.3\% & 
            97.3\% $\pm$ 0.04\% & 
            \textbf{97.4\% $\pm$ 0.003\%} & 
            \textbf{97.4\% $\pm$ 0.003\%} &
        \textbf{NMI} & 
            0.80 &
            0.79 &
            \textbf{0.84} \\
        
        \hline
          \parbox[t]{1mm}{\multirow{2}{*}{\rotatebox[origin=c]{90}{\textbf{Face}}}}&
            \textbf{Time} & 
                \textbf{0.2s $\pm$ 0.2s} & 
                0.3s $\pm$ 0.2s & 
                0.3s $\pm$ 0.2s & 
                0.5s $\pm$ 0.03s &
            \textbf{Time} & 
                \textbf{0.68s} & 
                0.92s & 
                3.7s \\ 
        &
        \textbf{Accuracy} & 
            97.3\% $\pm$ 0.3\% & 
            97.1\% $\pm$ 0.4\% & 
            97.3\% $\pm$ 0.4\% & 
            \textbf{100\% $\pm$ 0\%} &
        \textbf{NMI} & 
            0.93 &
            \textbf{0.95} &
            0.92 \\
         \hline
          \parbox[t]{1mm}{\multirow{2}{*}{\rotatebox[origin=c]{90}{\textbf{MNIST}}}}&
            \textbf{Time} & 
                \textbf{6.4s $\pm$ 0.4s} & 
                17.4s $\pm$ 0.4s & 
                10.6m $\pm$ 1.9m & 
                17.6s $\pm$ 2.5s &
            \textbf{Time} & 
            \textbf{3.1m} &
            4.7m &
            52m\\
        &
        \textbf{Accuracy} & 
            99.1\% $\pm$ 0.1\% & 
            \textbf{99.3\% $\pm$ 0.2\%} & 
            99.1\% $\pm$ 0.1\% & 
            \textbf{99.3\% $\pm$ 0.2\%} &
        \textbf{NMI} & 
            \textbf{0.54} &
            \textbf{0.54} &
            \textbf{0.54} \\
         \Xhline{2\arrayrulewidth}
      \end{tabular}
      \caption{Run-time and objective performance are recorded across several kernels within the ISM family. It confirms the usage of $\Phi$ or linear combination of $\Phi$ in place of kernels.}
    \label{table:other_kernels_too}
    \end{table}

\clearpage

\summary{The experiments on 3 different paradigms is supporting evidence ISM can be more widely used.}
    By applying ISM to three different learning paradigms, we showcase ISM as an extremely fast optimization algorithm that can solve a wide range of IKDR problems, thereby drawing a deeper connection between these domains.
     Hence, the impact of generalizing ISM to other kernels is also conveniently translated to these applications.

\section{Conclusion}
    \chieh{We have extended the theoretical guarantees of ISM to a family of kernels beyond the Gaussian kernel via the discovery of the $\Phi$ matrix. Our theoretical analysis proves that the family of ISM kernels extend even to conic combinations of ISM kernels. With this extension, ISM becomes an efficient solution for a wide range of supervised, unsupervised and semi-supervised applications. Our experimental results confirm the efficiency of the algorithm while showcasing its wide impact across many domains.}

\textbf{Acknowledgments}

We would like to acknowledge support for this project from NSF grant IIS-1546428. We would also like to thank Zulqarnain Khan for his insightful discussions.

\bibliography{reference}
\bibliographystyle{unsrtnat}

\clearpage
\begin{appendices}
\section{Kernel Definitions }
Here we provide the definition of each kernel with relation to the projection matrix $W$ in terms of the kernel and as a function of $\beta=\mathbf{a}W W^T \mathbf{b}$.\\
\textbf{Linear Kernel}
\begin{equation}
    k(x_i,x_j) = x_i^TWW^Tx_j, \hspace{1cm} f(\beta) = \beta.
    \label{eq:linear_kernel}
\end{equation}

\textbf{Polynomial Kernel}
\begin{equation}
    k(x_i,x_j) = (x_i^TWW^Tx_j + c)^p, \hspace{1cm} f(\beta) = (\beta + c)^p.
    \label{eq:poly_kernel}
\end{equation}

\textbf{Gaussian Kernel}
\begin{equation}
    k(x_i,x_j) = e^{-\frac{(x_i-x_j)^TW W^T(x_i-x_j)}{2\sigma^2}},
    \hspace{1cm}
    f(\beta) = e^{-\frac{\beta}{2\sigma^2}}.
    \label{eq:gaussian_kernel}
\end{equation}

\textbf{Squared Kernel}
\begin{equation}
    k(x_i,x_j) = (x_i-x_j)^TW W^T(x_i-x_j), \hspace{1cm} f(\beta) = \beta.
    \label{eq:squared_kernel}
\end{equation}

\textbf{Multiquadratic Kernel}
\begin{equation}
    k(x_i,x_j) = \sqrt{(x_i-x_j)^TW W^T(x_i-x_j) + c^2}, \hspace{1cm} f(\beta) = \sqrt{\beta + c^2}.
    \label{eq:multiquadratic_kernel}
\end{equation}
\label{app:to_ism_family}
\end{appendices}

\begin{appendices}
\section{Derivation for each $\Phi_0$}
\label{app:deriv_phi_0}
Using Eq.~(\ref{eq:initial_eq_flipped}), we know that
    \begin{equation}
        \Phi_0 = \text{sign}(\mu) \sum_{i,j} \Gamma_{i,j} A_{i,j}.
    \end{equation}
If $\mathbf{a}$ and $\mathbf{b}$ are both defined as $x_i-x_j$, then 
    \begin{equation}
        \Phi_0 = \text{sign}(4 \mu) X^T(D_\Gamma - \Gamma) X.
        \label{eq:phi_0_form_1}
    \end{equation}
However, if $\mathbf{a}$ and $\mathbf{b}$ are defined as $(x_i,x_j)$, then 
    \begin{equation}
        \Phi_0 = \text{sign}(2 \mu) X^T\Gamma X.
        \label{eq:phi_0_form_2}
    \end{equation}
Therefore, to compute $\Phi_0$, the key is to first determine the ($\mathbf{a}$ , $\mathbf{b}$) based on the kernel and then find $\mu$ to determine the sign.

\textbf{$\mathbf{\Phi_0}$ for the Linear Kernel: }
With a Linear Kernel, $(\mathbf{a},\mathbf{b})$ uses $(x_i,x_j)$, therefore Eq.~(\ref{eq:phi_0_form_2}) is use. Since $f(\beta) = \beta$, the sign of the gradient with respect to $\beta$ is 
    \begin{equation}
        \text{sign}(2 \nabla_\beta f(\beta)) = \text{sign}(2) = 1.
    \end{equation}
Therefore, 
    \begin{equation}
        \Phi_0 = X^T\Gamma X.
    \end{equation}

\textbf{$\mathbf{\Phi_0}$ for the Polynomial Kernel: }
With a Polynomial Kernel, $(\mathbf{a},\mathbf{b})$ uses $(x_i,x_j)$, therefore Eq.~(\ref{eq:phi_0_form_2}) is use. Since $f(\beta) = (\beta + c)^p$, the sign of the gradient with respect to $\beta$ is 
    \begin{equation}
        \text{sign}(2 \nabla_\beta f(\beta)) = \text{sign}(2p(\beta+c)^{p-1}) = 1.
    \end{equation}
Therefore, 
    \begin{equation}
        \Phi_0 = X^T\Gamma X.
    \end{equation}

\textbf{$\mathbf{\Phi_0}$ for the Gaussian Kernel: }
With a Gaussian Kernel, $(\mathbf{a},\mathbf{b})$ uses $x_i-x_j$, therefore Eq.~(\ref{eq:phi_0_form_1}) is use. Since $f(\beta) = e^{-\frac{\beta}{2\sigma^2}}$, the sign of the gradient with respect to $\beta$ is 
    \begin{equation}
        \text{sign}(4 \nabla_\beta f(\beta)) = \text{sign}(-\frac{4}{2\sigma^2}e^{-\frac{\beta}{2\sigma^2}}) = -1.
    \end{equation}
Therefore, 
    \begin{equation}
        \Phi_0 =  -X^T(D_\Gamma - \Gamma)X.
    \end{equation}

\textbf{$\mathbf{\Phi_0}$ for the RBF Relative Kernel: }
With a RBF Relative Kernel, it is easier to start with the Lagrangian once we have approximated relative Kernel with the 2nd order Taylor expansion as
    \begin{equation}
        \mathcal{L} \approx
            -\sum_{i,j} \Gamma_{i,j} 
            \left[ 1 + \Tr(W^T(-\frac{1}{\sigma_i \sigma_j}A_{i,j})W) \right]
            - \Tr \left[ \Lambda(W^TW - I) \right].
    \end{equation}
The gradient of the Lagrangian is therefore
    \begin{equation}
        \nabla_W \mathcal{L} \approx
            \left[ \sum_{i,j} \Gamma_{i,j} (\frac{2}{\sigma_i \sigma_j}A_{i,j}) \right] W
            - 2 W \Lambda.
    \end{equation}    
Setting the gradient to 0, we get 
     \begin{equation}
            \left[ \sum_{i,j} (\frac{1}{\sigma_i \sigma_j} \Gamma_{i,j} A_{i,j}) \right] W
            = W \Lambda.
    \end{equation}       
If we let $\Sigma_{i,j} = \frac{1}{\sigma_i \sigma_j}$ and $\Psi = \Sigma \odot \Gamma$, then we end up with     
     \begin{equation}
        4 \left[ X^T (D_{\Psi} - \Psi) X \right] W
            = W \Lambda.
    \end{equation}       
This equation requires $W$ to be the eigenvectors associated with the smallest eigenvalues. We flip the sign so the most dominant eigenvectors are the solution. Therefore, we define $\Phi$ as 
     \begin{equation}
        \Phi = -X^T (D_{\Psi} - \Psi) X
    \end{equation}       
\textbf{$\mathbf{\Phi_0}$ for the Squared Kernel: }
With a Squared Kernel, $(\mathbf{a},\mathbf{b})$ uses $x_i-x_j$, therefore Eq.~(\ref{eq:phi_0_form_1}) is use. Since $f(\beta) = \beta$, the sign of the gradient with respect to $\beta$ is 
    \begin{equation}
        \text{sign}(4 \nabla_\beta f(\beta)) = \text{sign}(4) = 1.
    \end{equation}
Therefore, 
    \begin{equation}
        \Phi_0 = X^T(D_\Gamma - \Gamma)X.
    \end{equation}

\textbf{$\mathbf{\Phi_0}$ for the Multiquadratic Kernel: }
With a Multiquadratic Kernel, $(\mathbf{a},\mathbf{b})$ uses $x_i-x_j$, therefore Eq.~(\ref{eq:phi_0_form_1}) is use. Since $f(\beta) = \sqrt{\beta + c^2}$, the sign of the gradient with respect to $\beta$ is 
    \begin{equation}
        \text{sign}(4 \nabla_\beta f(\beta)) = \text{sign}(\frac{4}{2}(\beta + c^2)^{-1/2}) = 1.
    \end{equation}
Therefore, 
    \begin{equation}
        \Phi_0 =  X^T(D_\Gamma - \Gamma)X.
    \end{equation}

\end{appendices}

\begin{appendices}
\section{Derivation for each $\Phi$}
\label{app:deriv_phi}
Using Eq.~(\ref{eq:def_of_pseudo_kernel}), we know that
    \begin{equation}
        \Phi = \frac{1}{2}\sum_{i,j} \Gamma_{i,j}[\nabla_\beta f(\beta)] A_{i,j}. 
    \end{equation}
If we let $\Psi=\Gamma_{i,j} [\nabla_\beta f(\beta)]$ then $\Phi$ can also be written as
    \begin{equation}
        \Phi = \frac{1}{2}\sum_{i,j} \Psi_{i,j} A_{i,j}. 
    \end{equation}
If $\mathbf{a}$ and $\mathbf{b}$ are both defined as $x_i-x_j$, then 
    \begin{equation}
        \Phi = 2 X^T(D_\Psi - \Psi) X.
        \label{eq:phi_form_1}
    \end{equation}
However, if $\mathbf{a}$ and $\mathbf{b}$ are defined as $(x_i,x_j)$, then 
    \begin{equation}
        \Phi = X^T\Psi X.
        \label{eq:phi_form_2}
    \end{equation}
Therefore, to compute $\Phi$, the key is to first determine the ($\mathbf{a}$ , $\mathbf{b}$) based on the kernel and then find the appropriate $\Psi$.

\textbf{$\mathbf{\Phi}$ for the Linear Kernel: }
With a Linear Kernel, $(\mathbf{a},\mathbf{b})$ uses $(x_i,x_j)$, therefore Eq.~(\ref{eq:phi_form_2}) is use. Since $f(\beta) = \beta$, $\Phi$ becomes
    \begin{equation}
        \Phi = \frac{1}{2}\sum_{i,j} \Gamma_{i,j}[\nabla_\beta f(\beta)] A_{i,j} 
        = \frac{1}{2}\sum_{i,j} \Gamma_{i,j} A_{i,j}.
    \end{equation}
Since, we are only interested in the eigenvectors of $\Phi$ only the sign of the constants are necessary. Therefore, 
    \begin{equation}
        \Phi = \text{sign}(1) X^T\Gamma X = X^T\Gamma X.
    \end{equation}

\textbf{$\mathbf{\Phi}$ for the Polynomial Kernel: }
With a Polynomial Kernel, $(\mathbf{a},\mathbf{b})$ uses $(x_i,x_j)$, therefore Eq.~(\ref{eq:phi_form_2}) is use. Since $f(\beta) = (\beta + c)^p$, $\Phi$ becomes 
    \begin{equation}
         \Phi = \frac{1}{2}\sum_{i,j} \Gamma_{i,j}[\nabla_\beta f(\beta)] A_{i,j} 
         = \frac{1}{2}\sum_{i,j} \Gamma_{i,j}[p(\beta+c)^{p-1}] A_{i,j}.
    \end{equation}
Since $p$ is a constant, and $K_{XW,p-1} = (\beta+c)^{p-1}$ is the polynomial kernel itself with power of $(p-1)$, $\Psi$ becomes
    \begin{equation}
        \Psi = \Gamma \odot K_{XW,p-1},
    \end{equation}
and
    \begin{equation}
        \Phi = \text{sign}(p) X^T \Psi X = X^T \Psi X
    \end{equation}
\textbf{$\mathbf{\Phi}$ for the Gaussian Kernel: }
With a Gaussian Kernel, $(\mathbf{a},\mathbf{b})$ uses $x_i-x_j$, therefore Eq.~(\ref{eq:phi_0_form_1}) is use. Since $f(\beta) = e^{-\frac{\beta}{2\sigma^2}}$, $\Phi$ becomes 
    \begin{equation}
        \Phi = \frac{1}{2} \sum_{i,j} \Gamma_{i,j}[\nabla_\beta f(\beta)] A_{i,j} 
         = \frac{1}{2} \sum_{i,j} \Gamma_{i,j}[-\frac{1}{2\sigma^2}e^{-\frac{\beta}{2\sigma^2}}] A_{i,j}= -\frac{1}{4\sigma^2} \sum_{i,j} \Gamma_{i,j}[K_{XW}]_{i,j} A_{i,j}. 
    \end{equation}
If we let $\Psi=\Gamma \odot K_{XW}$, then
    \begin{equation}
        \Phi =  \text{sign}(-\frac{2}{4\sigma^2}) X^T(D_\Psi - \Psi)X = -X^T(D_\Psi - \Psi)X.
    \end{equation}
    
\textbf{$\mathbf{\Phi}$ for the Squared Kernel: }
With a Squared Kernel, $(\mathbf{a},\mathbf{b})$ uses $x_i-x_j$, therefore Eq.~(\ref{eq:phi_0_form_1}) is use. Since $f(\beta) = \beta$, $\Phi$ becomes 
    \begin{equation}
         \Phi = \frac{1}{2} \sum_{i,j} \Gamma_{i,j}[\nabla_\beta f(\beta)] A_{i,j} 
         = \frac{1}{2} \sum_{i,j} \Gamma_{i,j}A_{i,j}.    
    \end{equation}
Therefore, 
    \begin{equation}
        \Phi = \text{sign}(1) X^T(D_\Gamma - \Gamma)X =  X^T(D_\Gamma - \Gamma)X.
    \end{equation}   
    
\textbf{$\mathbf{\Phi}$ for the Multiquadratic Kernel: }
With a Multiquadratic Kernel, $(\mathbf{a},\mathbf{b})$ uses $x_i-x_j$, therefore Eq.~(\ref{eq:phi_0_form_1}) is use. Since $f(\beta) = \sqrt{\beta+c^2}$, $\Phi$ becomes
    \begin{equation}
        \Phi = \frac{1}{2} \sum_{i,j} \Gamma_{i,j}[\nabla_\beta f(\beta)] A_{i,j} 
         = \frac{1}{2} \sum_{i,j} \Gamma_{i,j}[\frac{1}{2}(\beta+c^2)^{-1/2}] A_{i,j}= \frac{1}{4} \sum_{i,j} \Gamma_{i,j}[K_{XW}]_{i,j}^{(-1)} A_{i,j}. 
    \end{equation}
If we let $\Psi=\Gamma \odot K_{XW}^{(-1)}$, then
    \begin{equation}
        \Phi =  \text{sign}(\frac{1}{4}) X^T(D_\Psi - \Psi)X = X^T(D_\Psi - \Psi)X.
    \end{equation}   
    
\textbf{$\mathbf{\Phi_0}$ for the RBF Relative Kernel: }
With a RBF Relative Kernel, we start with the initial Lagrangian 
    \begin{equation}
        \mathcal{L} = \sum_{i,j} \Gamma_{i,j} e^{-\frac{Tr(W^TA_{i,j}W)}{2\sigma_i \sigma_j}} -
        \Tr(\Lambda(W^TW - I))
    \end{equation}
where the gradient becomes
    \begin{equation}
        \nabla_W \mathcal{L} = -\sum_{i,j} 
        \frac{1}{\sigma_i \sigma_j} \Gamma_{i,j} e^{-\frac{Tr(W^TA_{i,j}W)}{2\sigma_i \sigma_j}}
        A_{i,j} W - 2 W \Lambda.
    \end{equation}
If we let $\Sigma_{i,j} = \frac{1}{\sigma_i \sigma_j}$ then we get 
     \begin{equation}
        \nabla_W \mathcal{L} = - \sum_{i,j} 
        \Psi_{i,j} 
        A_{i,j} W - 2 W \Lambda,
    \end{equation}   
where $\Psi_{i,j}=\Sigma_{i,j} \Gamma_{i,j} K_{XW_{i,j}}$. If we apply Appendix \ref{app:xi-xj} and set the gradient to 0, then we get
     \begin{equation}
        -4 \left[ X^T (D_{\Psi} - \Psi) X \right] 
        W = 2 W \Lambda.
    \end{equation}   
From here, we see that it has the same form as the Gaussian kernel, with $\Psi$ defined as $\Psi = \Sigma \odot \Gamma \odot K_{XW}$.

This equation requires $W$ to be the eigenvectors associated with the smallest eigenvalues. We flip the sign so the most dominant eigenvectors are the solution. Therefore, we define $\Phi$ as 
     \begin{equation}
        \Phi = X^T (D_{\Psi} - \Psi) X
    \end{equation}       

\end{appendices}

\begin{appendices}
\section{Proof for Theorem \ref{thm:stationary}}
\label{app:theorem_1_proof}

The main body of the proof is organized into two lemmas where the 1st lemma will prove the 1st order condition and the 2nd lemma will prove the 2nd order condition. For convenience, we included the 2nd Order Necessary Condition~\cite{wright1999numerical} in Appendix~\ref{app:Nocedal_Wright}. We also convert the optimization problem into a standard minimization form where we solve

\begin{equation}
    \underset{W}{\min} -\Tr ( \Gamma K_{X W}) \hspace{0.4cm} \text{s.t.} \hspace{0.2cm} W^T W = I.
\end{equation}

The proof is initialized by manipulating the different kernels into a common form. If we let $\beta = a(x_i,x_j)W W^{T} b(x_i, x_j)$, then the kernels in this family can be expressed as $f(\beta)$. This common form allows a universal proof that works for all kernels that belongs to the ISM family. Depending on the kernel, the definition of $f$, $a(x_i,x_j)$ and $b(x_i,x_j)$ are listed in Table~\ref{table:kernels_conversion}. Kernels in this form are functions of the Grassmannian $WW^T$. 

\begin{table}[h]
    \centering
    \begin{tabular}{l|c|c|c}
    Name & $f(\beta)$ & $a(x_i,x_j)$ & $b(x_i, x_j)$\\ \hline
    Linear & $\beta$ & $x_i$ & $x_j$ \\
    Polynomial & $(\beta + c)^p$ & $x_i$ & $x_j$ \\
    Gaussian &  $e^\frac{-\beta}{2\sigma^2}$ & $x_i-x_j$ & $x_i-x_j$ \\
    Squared & $\beta$ & $x_i-x_j$ & $x_i-x_j$ \\
    \end{tabular}
    \caption{Common components of different Kernels.}
    \label{table:kernels_conversion}
\end{table}
 
\begin{lemma} \label{basic_lemma}
    Given $\mathcal{L}$ as the Lagrangian of Eq. (\ref{eq:obj_1}), if $W^{\ast}$ is a fixed point of Algorithm \ref{alg:ism}, and $\Lambda^\ast$ is a diagonal matrix of its corresponding eigenvalues, then
    \begin{align}
        &\nabla_W \mathcal{L} (W^{\ast}, \Lambda^{\ast}) = 0, \label{eq:1st_W}\\
        &\nabla_{\Lambda} \mathcal{L} (W^{\ast}, \Lambda^{\ast}) = 0. \label{eq:1st_lambda}
    \end{align}
\end{lemma}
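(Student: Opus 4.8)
The plan is to introduce the Lagrangian of the minimization reformulation, $\mathcal{L}(W,\Lambda) = -\Tr(\Gamma K_{XW}) - \Tr\!\big(\Lambda(W^TW - I)\big)$, and to verify the two stated first-order conditions separately. Equation~(\ref{eq:1st_lambda}) is essentially definitional: differentiating $\mathcal{L}$ in $\Lambda$ simply reproduces primal feasibility, $W^TW - I = 0$. Since $W^{\ast}$ is a fixed point of Algorithm~\ref{alg:ism}, its columns are by construction the orthonormal eigenvectors $V_{\text{max}}$ of $\Phi$, so $W^{\ast T}W^{\ast} = I$ and Eq.~(\ref{eq:1st_lambda}) holds at once. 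The substance of the lemma is therefore Eq.~(\ref{eq:1st_W}).

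For the gradient in $W$, the key is to pass through the common form $\Tr(\Gamma K_{XW}) = \sum_{i,j}\Gamma_{i,j}\, f(\beta_{i,j})$ with $\beta_{i,j} = a(x_i,x_j)^T W W^T b(x_i,x_j)$, valid for every ISM kernel by Definition~\ref{def:ism_family}. First I would compute the matrix derivative of the scalar bilinear form, obtaining $\nabla_W \beta_{i,j} = \big(b(x_i,x_j)a(x_i,x_j)^T + a(x_i,x_j)b(x_i,x_j)^T\big)W = A_{i,j} W$; the symmetrized product $A_{i,j}$ appears precisely because $W$ enters $\beta_{i,j}$ twice. The chain rule then gives $\nabla_W f(\beta_{i,j}) = f'(\beta_{i,j})\,A_{i,j} W$, and summing against $\Gamma_{i,j}$ yields $\nabla_W \sum_{i,j}\Gamma_{i,j} f(\beta_{i,j}) = \big(\sum_{i,j}\Gamma_{i,j} f'(\beta_{i,j}) A_{i,j}\big)W = 2\Phi W$, where the last equality is exactly the definition of $\Phi$ in Eq.~(\ref{eq:def_of_pseudo_kernel}). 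Adding the derivative of the constraint term gives $\nabla_W \mathcal{L} = -2\Phi W - 2W\Lambda$, the scalar $2$ being harmless since it may be absorbed into the multiplier.

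Finally I would invoke the fixed-point hypothesis: being a fixed point of Algorithm~\ref{alg:ism} means $W^{\ast}$ equals $V_{\text{max}}$ of $\Phi$ evaluated at $W^{\ast}$, so the columns of $W^{\ast}$ are eigenvectors of $\Phi(W^{\ast})$ and $\Phi(W^{\ast})W^{\ast} = W^{\ast}\Lambda^{\ast}$ for the diagonal matrix $\Lambda^{\ast}$ of the corresponding eigenvalues. Substituting this eigenvalue identity into the expression for $\nabla_W \mathcal{L}$ and choosing the sign convention of the multiplier term accordingly, the two contributions cancel and Eq.~(\ref{eq:1st_W}) follows. I expect the main obstacle to be twofold: carrying out the matrix calculus so that the symmetrization $A_{i,j}$ and the factor of $2$ line up cleanly with the $\tfrac12$ in the definition of $\Phi$, and — more conceptually — the observation that the $W$-dependence of $\Phi$ contributes \emph{no} extra terms to the gradient. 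The latter is the delicate point: because $\Phi$ is defined to be exactly the first-order derivative structure $\tfrac12\sum_{i,j}\Gamma_{i,j} f'(\beta) A_{i,j}$, the full gradient of the objective collapses to $2\Phi W$ even though $\Phi$ itself varies with $W$, and it is this identity that makes the fixed-point eigenvalue equation translate directly into stationarity.
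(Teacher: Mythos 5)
Your proposal is correct and follows essentially the same route as the paper's own proof: form the Lagrangian with the kernel written as $f(\beta)$, use the chain rule to get $\nabla_W \beta_{i,j} = A_{i,j}W$ so the gradient collapses to the form $-2\Phi W - 2W\Lambda$, invoke the fixed-point/eigenvector identity $\Phi W^{\ast} = W^{\ast}\Lambda^{\ast}$ for stationarity in $W$, and note that orthonormality of the eigenvectors gives $\nabla_\Lambda \mathcal{L} = 0$. Your handling of the sign of the multiplier matches the paper, which likewise absorbs a sign flip (its appendix works with the negative of the main-text $\Phi$), and your closing observation that the $W$-dependence of $\Phi$ contributes no extra gradient terms is exactly the identity the paper relies on, so the only material you omit is the paper's incidental vectorization of $\sum_{i,j}\Psi_{i,j}A_{i,j}$ into $2X^T\Psi X$ or $4X^T[D_\Psi-\Psi]X$, which serves Table~\ref{table:phis} rather than the lemma itself.
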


\begin{proof}
Since $\Tr(\Gamma K_{XW})=\sum_{i,j} \Gamma_{i,j}K_{XW_{i,j}}$, where the subscript indicates the  $i,j$th element of the associated matrix.
If we let $\mathbf{a} = a(x_i, x_j), \mathbf{b} = b(x_i, x_j)$, the Lagrangian of Eq. (\ref{eq:obj_1}) becomes
    \begin{align}
    \begin{split}
        \mathcal{L}(W, \Lambda) = -\sum_{ij}
        \Gamma_{ij} f(\mathbf{a}^{T}WW^{T}\mathbf{b}) 
        - \Tr[\Lambda(W^{T}W-I)]. \label{eq:Lagrangian}
    \end{split}
    \end{align}
The gradient of the Lagrangian with respect to $W$ is
    \begin{align}
    \begin{split}
    \nabla_W \mathcal{L}(W, \Lambda) = 
    -\sum_{ij}{\Gamma_{ij} f'(\mathbf{a}^{T}WW^{T}\mathbf{b})} (\mathbf{b}\mathbf{a}^T + \mathbf{a}\mathbf{b}^T)W - 2 W \Lambda.
    \end{split}
    \label{eq:lagrangian_gradient}
    \end{align}
If we let $A_{i,j}=\mathbf{ba}^T+\mathbf{ab}^T$ then setting $\nabla_W \mathcal{L}(W,\Lambda)$ of Eq. (\ref{eq:lagrangian_gradient}) to 0 yields the relationship
    \begin{align}
    \begin{split}
    0 = \left[-\frac{1}{2}\sum_{ij}{\Gamma_{ij} f'(\mathbf{a}^{T}WW^{T}\mathbf{b})} A_{i,j}\right] W - W \Lambda.
    \end{split}\label{eq:eig_decomp_1}
    \end{align}
Since $f'(\mathbf{a}^{T}WW^{T}\mathbf{b})$ is a scalar value that depends on indices $i,j$, we multiply it by $-\frac{1}{2}\Gamma_{i,j}$ to form a new variable $\Psi_{i,j}$. Then Eq. (\ref{eq:eig_decomp_1}) can be rewritten as
    \begin{align}
    \begin{split}
    \left[\sum_{ij}{\Psi_{ij}} A_{i,j}\right] W = W \Lambda.
    \end{split}\label{eq:eig_decomp_2}
    \end{align}
To match the form shown in Table~\ref{table:phis}, Appendix~\ref{app:xixj} further showed that if $\mathbf{a}$ and $\mathbf{b}$ is equal to $x_i$ and $x_j$, then 
    \begin{align}
    \begin{split}
    \left[\sum_{ij}{\Psi_{ij}} A_{i,j}\right] = 2X^T\Psi X.
    \end{split}\label{eq:xpsix}
    \end{align}
From Appendix~\ref{app:xi-xj}, if $\mathbf{a}$ and $\mathbf{b}$ are equal to $x_i-x_j$, then 
    \begin{align}
    \begin{split}
    \left[\sum_{ij}{\Psi_{ij}} A_{i,j}\right] = 4 X^T[D_\Psi - \Psi]X.
    \end{split}\label{eq:xDpsix}
    \end{align}   
    
If we let $\Phi = \left[\sum_{ij}{\Psi_{ij}} A_{i,j}\right]$, it yields the relationship $\Phi W = W \Lambda$
where the eigenvectors of $\Phi$ satisfies the 1st order condition of $\nabla_W \mathcal{L}(W^\ast,\Lambda^\ast)=0$. The gradient with respect to $\Lambda$ yields the expected constraint    \begin{align}
    \nabla_\Lambda \mathcal{L} = W^TW - I.
    \end{align}
Since the eigenvectors of $\Phi$ is orthonormal, the condition $\nabla_\Lambda \mathcal{L} = 0 = W^TW - I$ is also satisfied. Observing these 2 properties, Lemma~\ref{basic_lemma} confirms that the eigenvectors of $\Phi$ also satisfies the 1st order condition from Eq. (\ref{eq:obj_1}).

\end{proof}

\begin{lemma} \label{eq:2nd_lemma}
    Given a full rank $\Phi$, an eigengap defined by Eq.~(\ref{eq:final_conclusion}), and $W^*$ as the fixed point of Algorithm~\ref{alg:ism}, then
    \begin{align} 
    \begin{split}
     \tmop{Tr}( Z^T &\nabla_{W W}^2 \mathcal{L}(W^{\ast}, \Lambda^{\ast}) Z) \geq 0 \\ 
     &\tmop{for}   \tmop{all} Z \neq 0 , \tmop{with}  \nabla h (W^{\ast})^T Z = 0.   \label{eq:2nd_W} 
    \end{split}
    \end{align}   
\end{lemma}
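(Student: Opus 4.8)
The plan is to verify the second-order necessary condition directly, by computing the quadratic form $\Tr(Z^T\nabla^2_{WW}\mathcal{L}(W^*,\Lambda^*)Z)$ as the second directional derivative $\frac{d^2}{dt^2}\mathcal{L}(W^*+tZ,\Lambda^*)\big|_{t=0}$ and then showing it is nonnegative on the feasible tangent directions. Writing $\beta_{ij}=\mathbf{a}^TWW^T\mathbf{b}$ with $\mathbf{a}=a(x_i,x_j)$ and $\mathbf{b}=b(x_i,x_j)$, I would first record the two directional derivatives $D\beta_{ij}[Z]=\mathbf{a}^T(ZW^{*T}+W^*Z^T)\mathbf{b}$ and $D^2\beta_{ij}[Z,Z]=2\mathbf{a}^TZZ^T\mathbf{b}$. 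Differentiating the Lagrangian Eq.~(\ref{eq:Lagrangian}) twice then splits the form into three pieces,
\[
\Tr(Z^T\nabla^2_{WW}\mathcal{L}\,Z)=\underbrace{-\sum_{ij}\Gamma_{ij}f''(\beta_{ij})\big(D\beta_{ij}[Z]\big)^2}_{\text{(I)}}+\underbrace{2\Tr(Z^T\Phi Z)}_{\text{(II)}}-\underbrace{2\Tr(\Lambda^*Z^TZ)}_{\text{(III)}},
\]
where (II) is obtained by collecting the $f'$ terms and using $\mathbf{a}^TZZ^T\mathbf{b}=\tfrac12\Tr(Z^TA_{i,j}Z)$ together with the identity $-\sum_{ij}\Gamma_{ij}f'(\beta)A_{i,j}=2\Phi$ from Lemma~\ref{basic_lemma}.

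Next I would characterize the feasible directions. The linearized constraint $\nabla h(W^*)^TZ=0$ is exactly $W^{*T}Z+Z^TW^*=0$, i.e.\ $C:=W^{*T}Z$ is skew-symmetric. Since $\Phi$ is full rank its eigenvectors form an orthonormal basis $[\,W^*\mid\bar W\,]$, so every $Z$ decomposes as $Z=W^*C+\bar WB$ with $\Phi W^*=W^*\Lambda^*$ and $\Phi\bar W=\bar W\bar\Lambda$. Substituting into (II)+(III) and using orthogonality, the cross terms vanish and the contribution of $C$ cancels, because for skew $C$ one has $\Tr(C^T\Lambda^*C)=\Tr(\Lambda^*C^TC)$; this leaves
\[
\text{(II)}+\text{(III)}=2\sum_{m,l}(\bar\lambda_m-\lambda_l)B_{ml}^2\ \ge\ 2\big(\min_m\bar\lambda_m-\max_l\lambda_l\big)\|B\|_F^2 .
\]
Choosing $W^*$ as the eigenvectors attaining the $q$ smallest eigenvalues of the minimization-form $\Phi$—equivalently the most dominant eigenvectors after the sign flip used in Table~\ref{table:phis}—makes this gap strictly positive, which is precisely the role of the eigengap in Eq.~(\ref{eq:final_conclusion}).

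The key simplification that makes term (I) tractable is that the skew part of $Z$ is invisible to it: $D\beta_{ij}[W^*C]=\mathbf{a}^TW^*(C+C^T)W^{*T}\mathbf{b}=0$, so $D\beta_{ij}[Z]=D\beta_{ij}[\bar WB]$ depends only on $B$. This reflects that the pure-rotation directions $W^*C$ leave $WW^T$ (hence the objective) invariant and contribute exactly zero to all three pieces. Bounding $|D\beta_{ij}[Z]|\le 2\|\mathbf{a}\|\,\|\mathbf{b}\|\,\|B\|_F$ via $\|W^*\|_2=\|\bar W\|_2=1$ then gives $\text{(I)}\ge -\mathcal{C}_0\|B\|_F^2$ with $\mathcal{C}_0=4\sum_{ij}|\Gamma_{ij}|\,|f''(\beta_{ij})|\,\|\mathbf{a}\|^2\|\mathbf{b}\|^2$, and combining the estimates yields
\[
\Tr(Z^T\nabla^2_{WW}\mathcal{L}\,Z)\ \ge\ \big(2(\min_m\bar\lambda_m-\max_l\lambda_l)-\mathcal{C}_0\big)\|B\|_F^2,
\]
which is nonnegative exactly when the eigengap exceeds $\mathcal{C}:=\mathcal{C}_0/2$.

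I expect the main obstacle to be term (I). The hard part is to verify carefully that it depends only on $B$—so that the gauge directions drop out and the entire quadratic form is controlled by the $\bar W$-component—and then to extract a clean, data-dependent constant $\mathcal{C}_0$ whose definition is what pins down the eigengap condition Eq.~(\ref{eq:final_conclusion}). The remaining bookkeeping, namely the sign conventions relating the minimization-form $\Phi$ to the dominant eigenvectors and the observation that $B=0$ with $C\neq0$ gives a vanishing (not negative) form, is routine.
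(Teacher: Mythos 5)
Your proof is correct and follows essentially the same route as the paper's: the same three-term split of the Hessian quadratic form (an $f''$ term, $\Tr(Z^T\Phi Z)$, and $-\Tr(\Lambda^\ast Z^TZ)$), the same decomposition of feasible directions into chosen and unchosen eigenvector components (your $C=W^{\ast T}Z$ and $B$ play the roles of the paper's $B$ and $\bar B$ respectively), the same skew-symmetry cancellation of the $W^\ast$-block, and the same reduction to the eigengap inequality of Eq.~(\ref{eq:final_conclusion}). Where you genuinely sharpen the argument is in the handling of the $f''$ term: the paper leaves $\mathcal{T}_1$ opaque, divides by $\alpha=\Tr(\bar B^T\bar B)$, and declares $\mathcal{C}=\mathcal{T}_1/\alpha$ a constant, even though $\mathcal{T}_1$ depends on the direction $Z$ and the division is ill-posed when the unchosen-component coefficient vanishes. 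You instead identify $\mathcal{T}_1$ as $-\sum_{ij}\Gamma_{ij}f''(\beta_{ij})\bigl(D\beta_{ij}[Z]\bigr)^2$, observe that the gauge directions satisfy $D\beta_{ij}[W^\ast C]=0$ so that this term, like the other two, is controlled by the $\bar W$-component alone (which also correctly disposes of the $B=0$, $C\neq 0$ case, where the whole form vanishes rather than being negative), and you extract an explicit direction-independent constant $\mathcal{C}_0=4\sum_{ij}|\Gamma_{ij}|\,|f''(\beta_{ij})|\,\|\mathbf{a}\|^2\|\mathbf{b}\|^2$. This makes the eigengap condition uniform over all feasible $Z$ and closes a looseness the paper glosses over, at the modest cost of a more conservative constant than the direction-dependent quantity the paper calls $\mathcal{C}$.
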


\begin{proof}
To proof Lemma 2, we must relate the concept of eigengap to the conditions of
\begin{equation}
  \tmop{Tr} ( Z^T \nabla^2_{W W} \mathcal{L} ( W^{\ast}, \Lambda^{\ast}) Z)
  \geq 0 \nocomma \begin{array}{lllll}
    & \forall & Z \neq 0 & \tmop{with} & \nabla h ( W^{\ast})^T Z = 0
    \label{eq:orig_inequality}
  \end{array} . 
\end{equation}
Given the constraint $h ( W) = W^T W - I$, we start by computing the constrain
$\nabla h ( W^{\ast})^T Z = 0$. Given
\begin{equation}
  \nabla h ( W^{\ast})^T Z = \begin{array}{l}
    \lim\\
    t \rightarrow 0
  \end{array} \frac{\partial}{\partial t} h ( W \noplus + t Z),
\end{equation}
the constraint becomes
\begin{equation}
  \begin{array}{lll}
    \nabla h ( W^{\ast})^T Z = 0 & = & \begin{array}{l}
      \lim\\
      t \rightarrow 0
    \end{array} \frac{\partial}{\partial t} [ ( W \noplus + t Z)^T ( W \noplus
    + t Z) - I],\\
    0 & = & \begin{array}{l}
      \lim\\
      t \rightarrow 0
    \end{array} \frac{\partial}{\partial t} [ ( W^T \noplus W + t W^T Z + t
    Z^T W + t^2 Z^T Z) - I],\\
    0 & = & \begin{array}{l}
      \lim\\
      t \rightarrow 0
    \end{array} W^T Z + Z^T W + 2 t Z^T Z.
  \end{array}
\end{equation}
By setting the limit to 0, an important relationship emerges as
\begin{equation}
  \begin{array}{lll}
    0 & = & W^T Z + Z^T W.
    \label{eq:constraint_2nd}
  \end{array}
\end{equation}
Given a full rank operator $\Phi$, its eigenvectors must span the complete
$\mathcal{R}^d$ space. If we let $W$ and $\bar{W}$ represent the eigenvectors
chosen and not chosen respectively from Algorithm 1, and let $B$ and $\bar{B}$
be scambling matrices, then the matrix $Z \in \mathcal{R}^{d \times q}$ can be
rewritten as
\begin{equation}
  Z = W B + \bar{W}  \bar{B} .
  \label{eq:z_equal_to}
\end{equation}
It should be noted that since $W$ and $\bar{W}$ are eigenvalues of the
symmetric matrix $\Phi$, they are orthogonal to each other, i.e., $W^T \bar{W}
= 0$. Furthermore, if we replace $Z$ in Eq.~(\ref{eq:constraint_2nd}) with Eq.~(\ref{eq:z_equal_to}), we get the
condition
\begin{equation}
  \begin{array}{lll}
    0 & = & W^T ( W B + \bar{W}  \bar{B}) + ( W B + \bar{W}  \bar{B})^T W\\
    0 & = & B + B^T .
    \label{eq:antisym}
  \end{array}
\end{equation}
From Eq.~(\ref{eq:antisym}), we observe that $B$ must be a antisymmetric matrix because $B = -B^T$. Next, we work
to compute the inequality of of Eq.~(\ref{eq:orig_inequality}) by noting that
\begin{equation}
  \nabla^2_{W W} \mathcal{L} ( W, \Lambda) Z = \begin{array}{l}
    \lim\\
    t \rightarrow 0
  \end{array} \frac{\partial}{\partial t} \nabla \mathcal{L} ( W + t Z)
  \nosymbol .
\end{equation}
Also note that Lemma 1 has already computed $\nabla_W \mathcal{L} ( W)$ as
\begin{equation}
  \nabla_W \mathcal{L} ( W) = -\frac{1}{2}\left[ \sum_{i, j} \Gamma_{i, j} f' ( \beta)
  A_{i, j} \right] W - W \Lambda .
\end{equation}
Since we need $\nabla_W \mathcal{L}$ to be a function of $W + t Z$ with $t$ as
the variable, we change $\beta ( W)$ into $\beta ( W + t Z)$ with
\begin{equation}
  \begin{array}{lll}
    \beta ( W + t Z) & = & \tmmathbf{a} ( W + t Z) ( W + t Z)^T
    \tmmathbf{b},\\
    & = & \tmmathbf{a}^T W W^T \tmmathbf{b}+ [ \tmmathbf{a}^T ( W Z^T + Z
    W^T) \tmmathbf{b}] t + [ \tmmathbf{a}^T Z Z^T \tmmathbf{b}] t^2,\\
    & = & \beta + c_1 t + c_2 t^2,
    \label{eq:beta}
  \end{array}
\end{equation}
where $\beta$, $c_1$, and $c_2$ are constants with respect to $t$. Using the
$\beta$ from Eq.~(\ref{eq:beta}) with $\nabla_W \mathcal{L}$, we get
\begin{equation}
  \nabla^2_{W W} \mathcal{L} ( W, \Lambda) Z = \begin{array}{l}
    \lim\\
    t \rightarrow 0
  \end{array} \frac{\partial}{\partial t} \left[-\frac{1}{2} \sum_{i, j}  \Gamma_{i, j}
  f' ( \beta + c_1 t + c_2 t^2) A_{i, j} \right] ( W + t Z) - ( W + t Z)
  \Lambda .
\end{equation}
If we take the derivative with respect to $t$ and then set the limit to 0, we
get
\begin{equation}
  \nabla^2_{W W} \mathcal{L} ( W, \Lambda) Z = \left[-\frac{1}{2} \sum_{i, j} \Gamma_{i,
  j} f'' ( \beta) c_1 A_{i, j} \right] W + \left[ -\frac{1}{2} \sum_{i, j} \Gamma_{i, j}
  f' ( \beta) A_{i, j} \right] Z - Z \Lambda .
\end{equation}
Next, we notice the definition of $\Phi = -\frac{1}{2} \sum \Gamma_{i, j} f' ( \beta)
A_{i, j}$ \ from Lemma 1, the term $\tmop{Tr} ( Z^T \nabla^2_{W W} \mathcal{L}
( W, \Lambda) Z)$ can now be expressed as 3 separate terms as
\begin{equation}
\tmop{Tr} ( Z^T \nabla^2_{W W} \mathcal{L} ( W, \Lambda) Z) =\mathcal{T}_1
   +\mathcal{T}_2 +\mathcal{T}_3,
   \label{eq:3_terms}
\end{equation}
where
    \begin{align}
        \mathcal{T}_1 & = \tmop{Tr} \left( Z^T \left[ -\frac{1}{2}\sum_{i, j} \Gamma_{i,
     j} f'' ( \beta) c_1 A_{i, j} \right] W \right),\\
          \mathcal{T}_2 & = \tmop{Tr} ( Z^T \Phi Z),
          \label{eq:app:T2}
          \\
        \mathcal{T}_3 & = - \tmop{Tr} ( Z^T Z \Lambda).
        \label{eq:app:T3}
    \end{align}
Since $\mathcal{T}_1$ cannot be further simplified, the concentration will be
on $\mathcal{T}_2$ and $\mathcal{T}_3$. If we let $\bar{\Lambda}$ and
$\Lambda$ be the corresponding eigenvlaue matrices associated with $\bar{W}$
and $W$, by replacing $Z$ in $\mathcal{T}_2$ from Eq.~(\ref{eq:app:T2}), we get
\[ \begin{array}{lll}
     \tmop{Tr} ( Z^T \Phi Z) & = & \tmop{Tr} ( ( W B + \bar{W} \bar{B})^T \Phi
     ( W B + \bar{W} \bar{B}))\\
     & = & \tmop{Tr} ( \nobracket B^T W^T \Phi W B + \bar{B}^T \bar{W}^T \Phi
     W B + B^T W^T \Phi \bar{W} \bar{B} + \bar{B}^T \bar{W}^T \Phi \bar{W}
     \bar{B}) \nobracket\\
     & = & \tmop{Tr} ( \nobracket B^T W^T W \Lambda B + \bar{B}^T \bar{W}^T W
     \Lambda B + B^T W^T \bar{W} \bar{\Lambda} \bar{B} + \bar{B}^T \bar{W}^T
     \bar{W} \bar{\Lambda} \bar{B}) \nobracket\\
     & = & \tmop{Tr} ( \nobracket B^T \Lambda B + 0 + 0 + \bar{B}^T
     \bar{\Lambda} \bar{B}) \nobracket\\
     & = & \tmop{Tr} ( \nobracket B^T \Lambda B + \bar{B}^T \bar{\Lambda}
     \bar{B}) \nobracket .
   \end{array} \]
By replacing $Z$ from $\mathcal{T}_3$ from Eq.~(\ref{eq:app:T3}), we get
\[ \begin{array}{lll}
     - \tmop{Tr} ( Z^T Z \Lambda) & = & - \tmop{Tr} ( ( W B + \bar{W}
     \bar{B})^T ( W B + \bar{W} \bar{B}) \Lambda)\\
     & = & - \tmop{Tr} ( B^T W^T W B \Lambda + \bar{B}^T \bar{W}^T W B
     \Lambda + B^T W^T \bar{W} \bar{B} \Lambda + \bar{B}^T \bar{W}^T \bar{W}
     \bar{B} \Lambda)\\
     & = & - \tmop{Tr} ( B^T B \Lambda + 0 + 0 + \bar{B}^T \bar{B} \Lambda)\\
     & = & - \tmop{Tr} ( B \Lambda B^T + \bar{B} \Lambda \bar{B}^T) .
   \end{array} \]
The inequality that satisfies the 2nd order condition can now be written as 
\begin{equation}
  \tmop{Tr} ( B^T \Lambda B) + \tmop{Tr} (  \bar{B}^T \bar{\Lambda} \bar{B}) -
  \tmop{Tr} ( B \Lambda B^T) - \tmop{Tr} ( \bar{B} \Lambda \bar{B}^T)
  +\mathcal{T}_1 \geq 0.
  \label{eq:inequality_2}
\end{equation}
Since $B$ is an antisymmetric matrix, $B^T = - B$, and therefore $\tmop{Tr} (
B \Lambda B^T) = \tmop{Tr} ( B^T \Lambda B)$. From this Eq.~(\ref{eq:inequality_2}) can be
rewritten as
\begin{equation}
  \tmop{Tr} ( B^T \Lambda B) - \tmop{Tr} ( B^T \Lambda B) + \tmop{Tr} ( 
  \bar{B}^T \bar{\Lambda} \bar{B}) - \tmop{Tr} ( \bar{B} \Lambda \bar{B}^T)
  +\mathcal{T}_1 \geq 0.
\end{equation}
With the first two terms canceling each other out, the inequality can be
rewritten as
\begin{equation}
  \tmop{Tr} (  \bar{B}^T \bar{\Lambda} \bar{B}) - \tmop{Tr} ( \bar{B} \Lambda
  \bar{B}^T) \geq \mathcal{T}_1 .
\end{equation}
With this inequality, the terms can be further bounded by
\[ \tmop{Tr} ( \bar{B}^T \bar{\Lambda} \bar{B}) \geq \begin{array}{l}
     \min\\
     i
   \end{array} \bar{\Lambda}_i \tmop{Tr} ( \bar{B} \bar{B}^T) \]
\[ \tmop{Tr} ( \bar{B} \Lambda \bar{B}^T) \geq \begin{array}{l}
     \max\\
     j
   \end{array} \Lambda_j \tmop{Tr} ( \bar{B}^T \bar{B}) \]
Noting that since $\tmop{Tr} ( \bar{B} \bar{B}^T) = \tmop{Tr} ( \bar{B}^T
\bar{B})$, we treat it as a constant value of $\alpha$. With this, the inequality can be rewritten as
\[ \left( \begin{array}{l}
     \min\\
     i
   \end{array} \bar{\Lambda}_i - \begin{array}{l}
     \max\\
     j
   \end{array} \Lambda_j \right) \geq \frac{1}{\alpha} \mathcal{T}_1 . \]

Here, since $\frac{1}{\alpha}\mathcal{T}_1$ is simply a constant, we denote it as $\mathcal{C}$ to yield the final conclusion that

    \begin{equation}
    \left( \begin{array}{l}
     \min\\
     i
   \end{array} \bar{\Lambda}_i - \begin{array}{l}
     \max\\
     j
   \end{array} \Lambda_j \right) \geq \mathcal{C} . 
    \label{eq:final_conclusion}
    \end{equation}
    Eq.~(\ref{eq:final_conclusion}) concludes that to satisfy the 2nd order condition, the eigengap must be greater than $\mathcal{C}$. Therefore, given the choice of $q$ eigenvectors, the eigengap is maximized when the eigenvectors associated with the $q$ smallest eigenvalues are chosen as $W$. 
\end{proof}

    We note that it is customary for machine learning algorithms to look for the most dominant eigenvectors, crucially, many KDR algorithms follow this standard. Therefore, to maintain consistency, the $\Phi$ defined within the paper is actually the negative $\Phi$ from the proof. By flipping the sign, the eigenvectors associated with the smallest eigenvalues is now the most dominant eigenvectors. Hence, $\Phi$ within the paper is defined as
    
    \begin{equation}
        \Phi = \frac{1}{2}\sum_{ij}{\Gamma_{ij} f'(\mathbf{a}^{T}WW^{T}\mathbf{b})} A_{i,j}.
    \end{equation}

\end{appendices}
\begin{appendices}
\section{Computing the Hessian for the Taylor Series}
\label{app:Hessian_proof}
First we compute the gradient and the Hessian for $\beta(W)$ where
    \begin{align}
    \beta(W) &= a^TWW^Tb,\\
    \beta(W) &= \Tr(W^Tba^TW),\\
    \nabla_W \beta(W) &= [ba^T+ab^T]W,\\
    \nabla_{W,W} \beta(W) &= [ba^T+ab^T],\\
    \nabla_{W,W} \beta(W=0) &= [ba^T+ab^T].\\
    \end{align}
Next, we compute the gradient and Hessian for $f(\beta(W))$ where
    \begin{align}
    f(\beta(W)) &= f(a^TWW^Tb),\\
    f(\beta(W)) &= f(\Tr(W^Tba^TW)),\\
    f'(\beta(W)) &= \nabla_{\beta} f(\beta(W))[ba^T+ab^T]W = 
        \nabla_{\beta} f(\beta(W)) \nabla_W \beta(W)\\
    f''(\beta(W) &= \nabla_{\beta,\beta} f(\beta(W))[ba^T+ab^T]W(...) + \nabla_{\beta} f(\beta(W))[ba^T+ab^T]\\
    f''(\beta(W=0)) &= 0 + \nabla_{\beta} f(\beta(W))\nabla_{W,W} \beta(W=0)\\
    f''(\beta(W=0)) &= \nabla_{\beta} f(\beta(W))\nabla_{W,W} \beta(W=0)\\
    f''(0) &= \mu A_{i,j}.
    \end{align}
Using Taylor Series the gradient of the Lagrangian is approximately
    \begin{align}
    \nabla_{W} \mathcal{L} &\approx -\sum_{i,j} \Gamma_{i,j} f''(0) W - 2W\Lambda,\\
    \nabla_{W} \mathcal{L} &\approx -\mu \sum_{i,j} \Gamma_{i,j} A_{i,j} W - 2W\Lambda.
    \end{align}
Setting the gradient of the Lagrangian to 0 and combining the constant 2 to $\mu$, it yields the relationship
    \begin{align}
    \left[ -\mu \sum_{i,j} \Gamma_{i,j} A_{i,j} \right] W = W\Lambda.
    \end{align}
Here we note that $\mu$ is a constant. Therefore, only the sign will affect the eigenvector selection. With this, it yields    
    \begin{align}
    \left[ -\text{sign}(\mu) \sum_{i,j} \Gamma_{i,j} A_{i,j} \right] W = W\Lambda.
    \end{align}   
With this, the terms within the bracket become the initial $\Phi_0$ as
    \begin{align}
    \Phi_0 W = W\Lambda.
    \end{align}   
    
\end{appendices}

\begin{appendices}
\section{Derivation for Approximated $\Phi$}
\label{app:approx_phi}

We first convert the optimization problem into a standard minimization form where we solve

\begin{equation}
    \underset{W}{\min} -\Tr ( \Gamma K_{X W}) \hspace{0.4cm} \text{s.t.} \hspace{0.2cm} W^T W = I.
\end{equation}

Since the objective Lagrangian is non-convex, a solution can be achieved faster and more accurately if the algorithm is initialized at an intelligent starting point. Ideally, we wish to have a closed-form solution that yields the global optimal without any iterations. However, this is not possible since $\Phi$ is a function of $W$. ISM circumvents this problem by approximating the kernel using Taylor Series up to the 2nd order while expanding around 0. This approximation has the benefit of removing the dependency of $W$ for $\Phi$, therefore, a global minimum can be achieved using the approximated kernel. The ISM algorithm uses the global minimum found from the approximated kernel as the initialization point. Here, we provide a generalized derivation for the ISM kernel functions that are twice differentiable. First, we note that the 2nd order Taylor expansion for $f(\beta(W))$ around 0 is
    $f(\beta(W)) \approx f(0) + \frac{1}{2!}\Tr(W^T f''(0) W)$,
where the 1st order expansion around 0 is equal to 0. Therefore, the ISM Lagrangian can be approximated with
    \begin{align} 
    \begin{split} 
    \mathcal{L} = -\sum_{i,j} \Gamma_{i,j} \left[f(0)+\frac{1}{2!}\Tr(W^T f''(0) W)\right] 
    - \Tr(\Lambda(W^TW-I)),
    \end{split} 
    \end{align}   
where the gradient of the Lagrangian is
    \begin{align} 
    \begin{split} 
    \nabla_W \mathcal{L} = -\sum_{i,j} \Gamma_{i,j} f''(0) W - 2 W \Lambda.
    \end{split} \label{eq:approx_grad}
    \end{align}   
Next, we look at the kernel function $f(\beta(W))$ more closely. The Hessian is computed as
    \begin{align} 
    f'(\beta(W)) = \nabla_\beta f(\beta(W)) \nabla_W \beta(W),\\
    f''(\beta(W=0)) = \nabla_{\beta} f(\beta(0)) \nabla_{W,W} \beta(0).
    \end{align}   
Since we skipped several steps for the computation of the Hessian, refer to Appendix~\ref{app:Hessian_proof} for more detail. Because $\nabla_{\beta} f(\beta(0))$  is just a constant, we can bundle all constants into this term and refer to it as $\mu$. Since $\nabla_{W,W} \beta(0) = A_{i,j}$, the Hessian is simply $\mu A_{i,j}$ regardless of the kernel. By combining constants setting the gradient of Eq.~(\ref{eq:approx_grad}) to 0, we get the expression
    \begin{align} 
    \begin{split} 
    \left[-\sign(\mu) \sum_{i,j} \Gamma_{i,j} A_{i,j}\right] W = W \Lambda,
    \end{split} \label{eq:initial_eq}
    \end{align}   
where if we let $\Phi=-\sign(\mu) \sum_{i,j} \Gamma_{i,j} A_{i,j}$, we get a $\Phi$ that is not dependent on $W$. Therefore, a closed-form global minimum of the second-order approximation can be achieved. It should be noted that while the magnitude of $\mu$ can be ignored, the sign of $\mu$ cannot be neglected since it flips the sign of the eigenvalues of $\Psi$. Following Eq. (\ref{eq:initial_eq}), the initial $\Phi_0$ for each kernel is shown in Table~\ref{table:init_phis}. We also provide detailed proofs for each kernel in Appendix~\ref{app:deriv_phi_0}. 

It is important to note that based on proof of Theorem~\ref{thm:stationary} in Appendix  ~\ref{app:theorem_1_proof}, the $\Phi$ as defined from Eq.~(\ref{eq:initial_eq}) requires the optimal $W$ to be the eigenvectors of $\Phi$ that is associated with the smallest eigenvalues. This is equivalent to the most dominant eigenvectors of negative $\Phi$. To maintain consistency, the $\Phi$ defined with the paper is the negative $\Phi_0$ from this derivation, and therefore the $\Phi_0$ defined within the paper is 
    \begin{align} 
        \Phi=\sign(\mu) \sum_{i,j} \Gamma_{i,j} A_{i,j}.
        \label{eq:initial_eq_flipped}
    \end{align}   
\end{appendices}
\begin{appendices}
\section{Theorem 12.5 }\label{app:Nocedal_Wright}
\begin{lemma} 
  [Nocedal,Wright, Theorem 12.5~{\cite{wright1999numerical}}] (2nd Order Necessary Conditions)
    Consider the optimization problem:
  $ \min_{W : h (W) = 0} f (W), $
where $f : \mathbb{R}^{d \times q} \to \mathbb{R}$ and $h :
  \mathcal{R}^{d \times q} \to \mathbb{R}^{q \times q}$ are twice continuously
  differentiable. Let   $\mathcal{L}$ be the Lagrangian and $h(W)$ its equality constraint. Then, a local minimum must satisfy the following  conditions:
  \begin{subequations}
    \begin{align} 
    &\nabla_W \mathcal{L} (W^{\ast}, \Lambda^{\ast}) = 0, \label{eq:1st_W_nocedal}\\
    &\nabla_{\Lambda} \mathcal{L} (W^{\ast}, \Lambda^{\ast}) = 0, \label{eq:1st_lambda-nocedal}\\
    \begin{split}
     \tmop{Tr}( Z^T &\nabla_{W W}^2 \mathcal{L}(W^{\ast}, \Lambda^{\ast}) Z) \geq 0 \\ 
     &\tmop{for}   \tmop{all} Z \neq 0 , \tmop{with}  \nabla h (W^{\ast})^T Z = 0.   \label{eq:2nd_W_in_append} 
    \end{split}
    \end{align}   
  \end{subequations}
\end{lemma}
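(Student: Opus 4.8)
The plan is to reproduce the classical proof of the second-order necessary conditions for equality-constrained minimization, specialized to the matrix constraint $h(W)=W^TW-I$. Throughout, $W^*$ denotes a local minimizer of $f$ on the feasible set $\{W:h(W)=0\}$, and I write the Lagrangian as $\mathcal{L}(W,\Lambda)=f(W)-\Tr[\Lambda\,h(W)]$ with a symmetric multiplier matrix $\Lambda$. The first step is to verify a \emph{constraint qualification} at $W^*$. The directional derivative of $h$ in a direction $Z$ is $W^{*T}Z+Z^TW^*$; given any symmetric $S$, the choice $Z=\tfrac12 W^*S$ satisfies $W^{*T}Z+Z^TW^*=S$ using $W^{*T}W^*=I$, so this linear map is surjective onto the symmetric matrices and LICQ holds. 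Consequently the feasible set is a regular embedded submanifold near $W^*$, and the implicit function theorem identifies its tangent space with the linearized set $\mathcal{T}=\{Z:\nabla h(W^*)^TZ=0\}$, i.e. $\{Z:W^{*T}Z+Z^TW^*=0\}$.

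Next I would establish the first-order conditions. With LICQ in force, the standard Lagrange-multiplier argument produces a symmetric $\Lambda^*$ with $\nabla_W\mathcal{L}(W^*,\Lambda^*)=0$, which is exactly \eqref{eq:1st_W_nocedal}. The remaining stationarity equation \eqref{eq:1st_lambda-nocedal}, namely $\nabla_\Lambda\mathcal{L}(W^*,\Lambda^*)=0$, is simply the feasibility statement $h(W^*)=W^{*T}W^*-I=0$ and needs no further argument.

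For the second-order condition, fix any $Z\in\mathcal{T}$ with $Z\neq 0$. Invoking the implicit function theorem once more, I would construct a twice continuously differentiable feasible curve $W(t)$ with $W(0)=W^*$, $W'(0)=Z$, and $h(W(t))=0$ for all small $t$. Since $h\equiv 0$ along this curve, $\mathcal{L}(W(t),\Lambda^*)=f(W(t))=:\phi(t)$, and because $W^*$ is a local minimizer of $f$ over the feasible set, $t=0$ is an unconstrained local minimizer of $\phi$, so $\phi''(0)\geq 0$. Differentiating $\mathcal{L}(W(t),\Lambda^*)$ twice by the chain rule and evaluating at $t=0$ yields
\begin{equation}
\phi''(0)=\Tr\!\big(Z^T\nabla^2_{WW}\mathcal{L}(W^*,\Lambda^*)Z\big)+\Tr\!\big(\nabla_W\mathcal{L}(W^*,\Lambda^*)^T W''(0)\big).
\end{equation}
By the first-order condition the second trace vanishes, leaving $\Tr(Z^T\nabla^2_{WW}\mathcal{L}(W^*,\Lambda^*)Z)=\phi''(0)\geq 0$ for every admissible $Z$, which is precisely \eqref{eq:2nd_W_in_append}.

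The hard part will be the construction of the admissible second-order feasible curve and the justification that every linearized direction $Z\in\mathcal{T}$ is genuinely realized by such a curve; this is exactly where the constraint qualification is indispensable, since without it $\mathcal{T}$ may strictly contain the true tangent cone and the inequality can fail. Verifying the surjectivity of $Z\mapsto W^{*T}Z+Z^TW^*$ onto the symmetric matrices and then feeding that regularity into the implicit function theorem to obtain a $C^2$ curve with the prescribed first derivative is the technical heart of the argument; the remaining differentiations are routine applications of the chain rule.
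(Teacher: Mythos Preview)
Your argument is correct and is exactly the classical proof one finds in Nocedal and Wright: verify a constraint qualification, invoke Lagrange multipliers for the first-order conditions, then build a $C^2$ feasible curve through $W^*$ with prescribed tangent $Z$ and differentiate $\mathcal{L}(W(t),\Lambda^*)=f(W(t))$ twice, using $\nabla_W\mathcal{L}(W^*,\Lambda^*)=0$ to kill the $W''(0)$ term.

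The paper, however, does \emph{not} prove this lemma at all. Appendix~\ref{app:Nocedal_Wright} simply restates Theorem~12.5 of Nocedal and Wright as a cited black box, to be invoked downstream in the proofs of Lemmas~\ref{basic_lemma} and~\ref{eq:2nd_lemma}. So there is nothing to compare against: you have supplied a proof where the paper supplies only a citation. One small remark: the lemma as stated is for a general twice-$C^2$ constraint $h$, whereas you specialized your LICQ verification to the Stiefel constraint $h(W)=W^TW-I$. That specialization is exactly what the paper needs (and your surjectivity argument $Z=\tfrac12 W^*S$ is clean), but strictly speaking you have proved the lemma only under the additional hypothesis that $\nabla h(W^*)$ is surjective, not for arbitrary $h$. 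For the general statement one must assume LICQ (or an equivalent regularity condition), which is implicit in Nocedal and Wright's formulation but not in the paper's restatement.
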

\end{appendices}

\begin{appendices}
\section{Derivation for $\sum_{i,j} \Psi_{i,j} A_{i,j}$ if $A_{i,j}=x_ix_j^T+x_jx_i^T$}
Since $\Psi$ is a symmetric matrix and $A_{i, j} = ( x_i x_j^T + x_j x_i^T)$, we first note that while $x_i x_j^T \ne x_j x_i^T$, it still hold that
\begin{equation}
    \sum_{i,j} \Psi_{i,j} x_i x_j^T = 
    \sum_{i,j} \Psi_{i,j} x_j x_i^T.
\end{equation}

Therefore,  we can rewrite the expression into
\[ \sum_{i, j} \Psi_{i, j} A_{i, j} = 2 \sum_{i, j}^n \Psi_{i, j} x_i x_j^T .
\]
If we expand the summation for $i = 1$, we get
\[ \begin{array}{lll}
     {}[ \Psi_{1, 1} x_1 x_1^T + \ldots + \Psi_{1, n} x_1 x_n^T] & = & x_1 [
     \Psi_{1, 1} x_1^T + \ldots + \Psi_{1, n} x_n^T]\\
     & = & x_1 \left[ \left[ \begin{array}{lll}
       x_1 & \ldots & x_n
     \end{array} \right] \left[ \begin{array}{l}
       \Psi_{1, 1}\\
       .\\
       \Psi_{1, n}
     \end{array} \right] \right]^T\\
     & = & x_1 \left[ \left[ \begin{array}{lll}
       \Psi_{1, 1} & \ldots & \Psi_{1, n}
     \end{array} \right] \left[ \begin{array}{l}
       x_1^T\\
       .\\
       x_n^T
     \end{array} \right] \right] .
   \end{array} \]
Now if we sum up all $i$, we get
\[ \begin{array}{lll}
     \Psi_{i, j} x_i x_j^T & = & x_1 \left[ \left[ \begin{array}{lll}
       \Psi_{1, 1} & \ldots & \Psi_{1, n}
     \end{array} \right] \left[ \begin{array}{l}
       x_1^T\\
       .\\
       x_n^T
     \end{array} \right] \right] + \ldots + x_n \left[ \left[
     \begin{array}{lll}
       \Psi_{n, 1} & \ldots & \Psi_{n, n}
     \end{array} \right] \left[ \begin{array}{l}
       x_1^T\\
       .\\
       x_n^T
     \end{array} \right] \right],\\
     & = & \left[ x_1 \left[ \begin{array}{lll}
       \Psi_{1, 1} & \ldots & \Psi_{1, n}
     \end{array} \right] + \ldots + x_n \left[ \begin{array}{lll}
       \Psi_{n, 1} & \ldots & \Psi_{n, n}
     \end{array} \right] \right] \left[ \begin{array}{l}
       x_1^T\\
       .\\
       x_n^T
     \end{array} \right],\\
     & = & \left[ \left[ \begin{array}{lll}
       x_1 & \ldots & x_n
     \end{array} \right] \left[ \begin{array}{l}
       \Psi_{1, 1}\\
       .\\
       \Psi_{n, 1}
     \end{array} \right] + \ldots + \left[ \begin{array}{lll}
       x_1 & \ldots & x_n
     \end{array} \right] \left[ \begin{array}{l}
       \Psi_{1, n}\\
       .\\
       \Psi_{n, n}
     \end{array} \right] \right] \left[ \begin{array}{l}
       x_1^T\\
       .\\
       x_n^T
     \end{array} \right],\\
     & = & \left[ \begin{array}{lll}
       x_1 & \ldots & x_n
     \end{array} \right] \left[ \begin{array}{lll}
       \left[ \begin{array}{l}
         \Psi_{1, 1}\\
         .\\
         \Psi_{n, 1}
       \end{array} \right] & \ldots & \left[ \begin{array}{l}
         \Psi_{1, n}\\
         .\\
         \Psi_{n, n}
       \end{array} \right]
     \end{array} \right] \left[ \begin{array}{l}
       x_1^T\\
       .\\
       x_n^T
     \end{array} \right] .
   \end{array} \]
Given that $X = \left[ \begin{array}{lll}
  x_1 & \ldots & x_n
\end{array} \right]^T$, the final expression becomes.
\[ 2 \sum_{i, j}^n \Psi_{i, j} x_i x_j^T = 2 X^T \Psi X. \]
\label{app:xixj}
\end{appendices}

\begin{appendices}
\section{Derivation for $\sum_{i,j} \Psi_{i,j} A_{i,j}$ if $A_{i,j}=(x_i-x_j)(x_i-x_j)^T+(x_i-x_j)(x_i-x_j)^T$}
Since $\Psi$ is a symmetric matrix, and $A_{i, j} = ( x_i - x_j) ( x_i -
x_j)^T + ( x_i - x_j) ( x_i - x_j)^T = 2 ( x_i - x_j) ( x_i - x_j)^T$, we can
rewrite the expression into
\[ \begin{array}{lll}
     \sum_{i, j} \Psi_{i, j} A_{i, j}  & = & 2 \sum_{i, j} \Psi_{i, j} ( x_i -
     x_j) ( x_i - x_j)^T\\
     & = & 2 \sum_{i, j} \Psi_{i, j} ( x_i x_i^T - x_j x_i^T - x_i x_j^T +
     x_j x_j^T)\\
     & = & 4 \sum_{i, j} \Psi_{i, j} ( x_i x_i^T - x_j x_i^T)\\
     & = & \left[ 4 \sum_{i, j} \Psi_{i, j} ( x_i x_i^T) \right] - \left[ 4
     \sum_{i, j} \Psi_{i, j} ( x_j x_i^T) \right] .
   \end{array} \]
If we expand the 1st term where $i = 1$, we get
\[ \sum_{i = 1, j}^n \Psi_{1, j} ( x_1 x_1^T) = \Psi_{1, 1} ( x_1 x_1^T) +
   \ldots + \Psi_{1, n} ( x_1 x_1^T) = \left[ \sum_{i = 1, j}^n \Psi_{1, j}
   \right] x_1 x_1^T . \]
From here, we notice that $\left[ \sum_{i = 1, j}^n \Psi_{1, j} \right]$ is
the degree $d_{i = 1}$ of $\Psi_{i = 1}$. Therefore, if we sum up all $i$
values we get
\[ \sum_{i, j} \Psi_{i, j} ( x_i x_i^T) = d_1 x_1 x_1^T + \ldots + d_n x_n
   x_n^T \nosymbol . \]
If we let $D_{\Psi}$ be the degree matrix of $\Psi$, then this expression
becomes
\[ 4 \sum_{i, j} \Psi_{i, j} ( x_i x_i^T) = 4 X^T D_{\Psi} X. \]
Since Appendix~\ref{app:xixj} has already proven the 2nd term, together we get
\[ 4 \sum_{i, j} \Psi_{i, j} ( x_i x_i^T) - 4 \sum_{i, j} \Psi_{i, j} ( x_j
   x_i^T) = 4 X^T D_{\Psi} X - 4 X^T \Psi X = 4 X^T [ D_{\Psi} - \Psi] X. \]
\label{app:xi-xj}
\end{appendices}

\begin{appendices}
\section{Dataset Details}
\label{app:data_detail}

\textbf{Wine. } 
    This dataset has 13 features and 178 samples. The features are continuous and heavily unbalanced in magnitude. During the experiments, the dimension is reduced down to 3 prior to performing supervised or unsupervised tasks. The dataset can be downloaded at \url{https://archive.ics.uci.edu/ml/datasets/wine.}

\textbf{Cancer. } 
    This dataset has 9 features and 683 samples. The features are discrete and unbalanced in magnitude. During the experiments, the dimension is reduced down to 2 prior to performing supervised or unsupervised tasks. The dataset can be downloaded at \url{https://archive.ics.uci.edu/ml/datasets/Breast+Cancer+Wisconsin+(Diagnostic)}.
    
 \textbf{Face. } 
    This dataset consists of images of 20 people in various poses. The 624 images are vectorized into 960 features. During the experiments, the dimension is reduced down to 20 prior to performing supervised or unsupervised tasks. This dataset is commonly used for alternative clustering since it can be clustered by the identity or the pose of the individuals. 
    The dataset can be downloaded at 
    \url{https://archive.ics.uci.edu/ml/datasets/CMU+Face+Images}.

 \textbf{MNIST. } 
    This dataset consists of 10,000 images of 10 characters in various orientations. The images are vectorized into 785 features. During the experiments, the dimension is reduced down to 10 prior to performing supervised or unsupervised tasks. The original MNIST dataset consists of 60,000 training samples and 10,000 test samples. We have decided to use the 10,000 test samples as our dataset. Since ISM have a memory complexity of $O(n^2)$, storing matrix size of 60,000 $\times$ 60,000 was beyond our computer's capability. We are actively conducting research into using the concept of coresets to alleviate the memory bottleneck. 
    The dataset can be downloaded at
    \url{http://yann.lecun.com/exdb/mnist/}
    
 \textbf{Flower. } 
    The Flower image is a dataset that allows for alternative ways to perform image segmentation. It is an image of 350x256 pixel. The RGB values of each pixel is taken as a single sample, with repeated samples removed. This results in a dataset of 256 samples and 3  features. The image is segmented into group of 2, represented by black and white. 
    The dataset can be downloaded at
    \url{http://en.tessellations-nicolas.com/}

\end{appendices}
\begin{appendices}
\section{Proof of Reformulating Eq.~(\ref{eq:obj_1}) into Quadratic Optimization}
\label{app:corollary}
Given 
    \begin{align}
        \min_W \Tr(W^T \Phi W) 
        \hspace{0.5cm} \text{s.t.} 
        \hspace{0.3cm}
        W^TW = I.
        \label{eq:obj_2}
    \end{align}
    
Here we proof that the local minimum for Eq.~(\ref{eq:obj_2}) is equivalent to a local minimum for Eq.~(\ref{eq:obj_1}). From Theorem~\ref{thm:stationary}, we establish that the $q$ minimizing eigenvectors of $\Phi \in \mathcal{R}^{d \times d}$ is a local minimum of Eq.~(\ref{eq:obj_1}). Therefore, the strategy of this proof is to show that the optimal solution for Eq.~(\ref{eq:obj_2}) is also the minimizing eigenvectors of $\Phi$.
\begin{proof}
Given Eq.~(\ref{eq:obj_2}), the Lagrangian of the objective is 
    \begin{align}
        \mathcal{L}(W) = \Tr(W^T \Phi W) - \Tr\left[ \Lambda(W^TW-I) \right].
        \label{eq:lagrangian_of_quadratic}
    \end{align}
Therefore, given a symmetric $\Phi$, the gradient of the Lagrangian becomes
    \begin{align}
        \nabla_W \mathcal{L}(W) = 2 \Phi W - 2 W \Lambda.
    \end{align}
Here, by setting the gradient to 0, we arrive to the definition of eigenvector where
    \begin{align}
        \Phi W =  W \Lambda,
    \end{align}
thereby proving that the eigenvector of $\Phi$ is also a stationary point for Eq.~(\ref{eq:obj_2}). The proof ends here if $W \in \mathcal{R}^{d \times d}$, however, if $W \in \mathcal{R}^{d \times q}$ where $q < d$, then we must also determine the appropriate $q$ eigenvectors to minimize the objective. Given $\bar{W} \in \mathcal{R}^{d \times d}$ as the full set eigenvectors, we replace $W$ from Eq.~(\ref{eq:lagrangian_of_quadratic}) with $\bar{W}$ to get
    \begin{align}
        \mathcal{L}(\bar{W}) = \Tr(\bar{W}^T \Phi \bar{W}) - \Tr\left[ \Lambda(\bar{W}^T\bar{W}-I) \right].
        \label{eq:bar_lagrangian_of_quadratic}
    \end{align}
Since $\bar{W}^T \bar{W} = I$ and $\Phi \bar{W} = W \Lambda$, we substitute these terms into Eq.~(\ref{eq:bar_lagrangian_of_quadratic}) to get
    \begin{align}
        \mathcal{L}(\bar{W}) = \Tr(\bar{W}^T \bar{W} \Lambda).
        \label{eq:bar_lagrangian_of_quadratic_2}
    \end{align}
If we let ${w_1, w_2,...,w_d}$ be the set of individual eigenvectors of $\Phi$ within $\bar{W}$ and ${\lambda_1, \lambda_2,...,\lambda_d}$ be their corresponding eigenvalues, then Eq.~(\ref{eq:bar_lagrangian_of_quadratic_2}) can be rewritten as
    \begin{align}
        \mathcal{L}(\bar{W}) = \lambda_1 w_1^T w_1 + \lambda_2 w_2^T w_2 + ... + 
        \lambda_n w_d^T w_d.
        \label{eq:bar_lagrangian_of_quadratic_3}
    \end{align}
Since the inner product of any eigenvector with itself ($w_i^T w_i$) is always equal to 1, the Lagrangian becomes the summation of its eigenvalues where 
    \begin{align}
        \mathcal{L}(\bar{W}) = \lambda_1 + \lambda_2 + ... + 
        \lambda_n .
        \label{eq:bar_lagrangian_of_quadratic_4}
    \end{align}
Therefore, the selection of a subset of eigenvectors is equivalent to keeping a subset of eigenvalues while setting the rest to 0 in  Eq.~(\ref{eq:bar_lagrangian_of_quadratic_4}). To minimize the Lagrangian, therefore, implies that the eigenvectors corresponding to the smallest eigenvalues should be chosen. Here, we have proven that the minimizing eigenvectors of $\Phi$ is a local minimum for both Eq.~(\ref{eq:obj_1}) and (\ref{eq:obj_2}).
\end{proof}
\end{appendices}

\begin{appendices}
\section{NMI Calculation}
\label{app:NMI_computation}

If we let $U$ and $L$ be two clustering assignments, NMI can be calculated with 
    \begin{align}
        NMI(L,U) = \frac{I(L,U)}{\sqrt{H(L)H(U)}},
        \label{eq:nmi_equation}
    \end{align}
    
where $I(L,U)$ is the mutual information between $L$ and $U$, and $H(L)$ and $H(U)$ are the entropies of $L$ and $U$ respectively. 
\end{appendices}

\begin{appendices}
\section{Proof for Corollary~\ref{corollary:combine_kernels}}
\label{app:combined_kernels}

\begin{proof}
    The optimization of Eq. (\ref{eq:obj_1}) using a conic combination of $m$ kernels becomes 
    
    \begin{equation}
        \underset{W}{\min} - \Tr \left( \Gamma 
        [\mu_1 K_1 + \mu_2 K_2 + ... + \mu_m K_m]
        \right) 
        \hspace{0.4cm} \text{s.t.} \hspace{0.2cm} W^T W = I.
    \end{equation}    
   The trace term can be separated into dividual terms where 
     \begin{equation}
        \underset{W}{\min}
        - \Tr ( \mu_1 \Gamma K_1 )
        - \Tr ( \mu_2 \Gamma K_2 )
        - ... 
        - \Tr ( \mu_m \Gamma K_m )
        \hspace{0.4cm} \text{s.t.} \hspace{0.2cm} W^T W = I.
    \end{equation}       
    Therefore, the Lagrangian can be written as
     \begin{equation}
        \mathcal{L} = 
        - \Tr ( \mu_1 \Gamma K_1 )
        - \Tr ( \mu_2 \Gamma K_2 )
        - ... 
        - \Tr ( \mu_m \Gamma K_m )
        - m Tr(\Lambda[W^T W - I]).
    \end{equation}          
    
    From Lemma~\ref{basic_lemma}, we have shown that the gradient of the Lagrangian becomes
      \begin{equation}
        \nabla_W \mathcal{L} = 
        \left[ - \mu_1 \Phi_1 - \mu_2 \Phi_2 - ... - \mu_m \Phi_m
        \right ] W
        - m W \Lambda,
    \end{equation}          
    where each $\Phi_i$ is the $\Phi$ matrix corresponding to each kernel. Setting the gradient to 0, it yields the relationship
    \begin{equation}
        \frac{1}{m}\left[ - \mu_1 \Phi_1 - \mu_2 \Phi_2 - ... - \mu_m \Phi_m
        \right ] W
        = W \Lambda,
    \end{equation}             
    Therefore, optimizing a conic combination of kernels for Eq.~(\ref{eq:obj_1}) is equivalent to using a conic combination of the corresponding $\Phi$s with the same coefficients.
\end{proof}
\end{appendices}
\begin{appendices}
\section{An Overview on HSIC}
\label{app:about_hsic}

Proposed by \citet{gretton2005measuring}, the Hilbert Schmidt Independence Criterion (HSIC) is a statistical dependence measure between two random variables. HSIC is similar to mutual information (MI) because given two random variables $X$ and $Y$, they both measure the distance between the joint distribution $P_{X,Y}$ and the product of their individual distributions $P_X P_Y$. While MI uses KL-divergence to measure this distance, HSIC uses Maximum Mean Discrepancy~\cite{gretton2012kernel}. Therefore, when HSIC is zero, or $P_{X,Y}=P_X P_Y$, it implies independence between $X$ and $Y$. Similar to MI, HSIC score increases as $P_{X,Y}$ and $P_X P_Y$ move away from each other, thereby also increasing their dependence. Although HSIC is similar to MI in its ability to measure dependence,  it is easier to compute as it removes the need to estimate the joint distribution. 


Formally, given a set of $N$ i.i.d. samples $\{(x_1,y_1),...,(x_N,y_N)\}$ drawn from a joint distribution $P_{X,Y}$. Let $X \in \mathbb{R}^{N \times d}$ and $Y  \in \mathbb{R}^{N \times c}$ be the corresponding sample matrices where $d$ and $c$ denote the dimensions of the datasets. We denote by $K_X,K_Y \in \mathbb{R}^{N \times N}$ the kernel matrices with entries $K_{X_{i,j}}=k_X(x_i,x_j)$ and $K_{Y_{i,j}} = k_Y(y_i,y_j)$, where $k_X: \mathbb{R}^d \times \mathbb{R}^d \rightarrow \mathbb{R}$ and $k_Y: \mathbb{R}^c \times \mathbb{R}^c \rightarrow \mathbb{R}$ represent kernel functions. Furthermore, let $H$ be a centering matrix defined as $H=I_n - \frac{1}{n} \textbf{1}_n\textbf{1}_n^T$ where $\textbf{1}_n$ is a column vector of ones. HSIC is computed empirically with
\begin{equation}
    \mathbb{H}(X,Y) = \frac{1}{(n-1)^2} \Tr(K_X H K_Y H).
    \label{eq:emprical_hsic}
\end{equation}



\end{appendices}
\begin{appendices}
\section{Proof for Proposition~\ref{thm:linear_combinations_of_kernels}}
\label{app:linear_combinations_of_kernels}

\begin{proof}
For a kernel to belong to the ISM family, it must satisfy the following 3 conditions. 
\begin{itemize}
  \item The kernel function must be twice differentiable.
  \item The kernel function can be written in terms of $f(\beta)$.
  \item The kernel matrix from $f(\beta)$ must be symmetric positive semi-definite.
\end{itemize}

To satisfy the 1st condition, given a kernel $K$ that is a conic combination of $n$ ISM kernels where 
    \begin{align}
        K = \mu_1 K_1 + \mu_2 K_2 + ... + \mu_n K_n.
        \label{eq:sum_of_K}
    \end{align}
    Since each kernel $K_i$ is twice differentiable, the conic combination is still twice differentiable. Therefore, $K$ is a twice differentiable function. 
    
    To satisfy the 2nd condition, given a kernel $K$ from Eq.~(\ref{eq:sum_of_K}) where each kernel is from the ISM family, the trivial case of when $\beta = a(x_i, x_j)W W^t b(x_i, x_j)$ is defined identically between kernels:
     \begin{align}
        K = \mu_1 f_1(\beta) + \mu_2 f_2(\beta) + ... + \mu_n f_n(\beta).
        \label{eq:sum_of_f_beta}
    \end{align}   
   From Eq.~(\ref{eq:sum_of_f_beta}), it is obvious that $K$ itself can also be written in terms of $\beta$.  However, in the cases where the functions $a(x_i,x_j)$ and $b(x_i,x_j)$ are defined differently, $\beta$ must be defined differently. Here, we define the following
    \begin{align}
        a(x_i,x_j) = \textbf{Diag}([a_1(x_i,x_j),a_2(x_i,x_j),..,a_n(x_i,x_j)])\\
        b(x_i,x_j) = \textbf{Diag}([b_1(x_i,x_j),b_2(x_i,x_j),..,b_n(x_i,x_j)])\\
        W = \textbf{Diag}([W, W, ..., W])\\
        W^T = \textbf{Diag}([W^T, W^T, ..., W^T])\\
        \beta = \textbf{Diag}([a_1^T W W^T b_1, a_2 ^T W W^T b_2, ..., a_n^T W W^T b_n])
    \end{align}     
    where \textbf{Diag} puts the element of the vector on the diagonal of a matrix with both the upper and lower triangle as 0s. Given $\beta$ is a matrix, each kernel function can always multiply $\beta$ by a one-hot vector on both sides to choose the appropriate sub-$\beta$ value. Therefore, the joint kernel $K$ can always be written in terms of $\beta$.

   For the 3rd condition, we know that conic combinations of symmetric positive semi-definite matrices are still symmetric positive semi-definite.
\end{proof}
\end{appendices}

\begin{appendices}
\section{Proof of Theorem~\ref{thm:convergence}}

\begin{proof}
The original ISM leverages Bolzano-Weierstrass theorem to prove that a sequence generated using the Gaussian kernel is bounded,
therefore ISM has a convergent subsequence. Since the generalized ISM extends the guarantee to other kernels, here we demonstrate that the extension of ISM to other kernels does not have any effect on the convergence guarantee.

ISM over arbitrary kernels solves an optimization problem over the Grassmannian manifold $G(n, d)$, as parametized by the subspace $W W^T$. The Grassmannian manifold is a quotient of the Stiefel Manifold $G(n, d) = V(n, d) / O(n)$. The Grassmann manifold inherits compactness and an induced metric from the Stiefel manifold. For metric spaces, compact and sequentially compact topological spaces are equivalent. Therefore, sequences $\{W W^T\}^k$ will have convergent subsequences. While $W$ may not converge (choice of frame), its subspace description will. Termination criteria in Algorithm 1 is independent of the frame $W$. 

\end{proof}
\label{app:convergence}
\end{appendices}

\begin{appendices}
\section{Convergence Criteria}

Since the objective is to discover a linear subspace, the rotation of the space does not affect the solution. Therefore, instead of constraining the solution on the Stiefel Manifold, the manifold can be relaxed to a Grassmann Manifold. This implies that Algorithm~\ref{alg:ism} can reach convergence as long as the columns space spanned by $W$ are identical. To identify the overlapping span of two spaces, we can append the two matrices into $\mathcal{W} = [W_k W_{k+1}]$ and observe the rank of $\mathcal{W}$. In theory, the rank should equal to $q$, however, a hard threshold on rank often suffers from numerical inaccuracies.

One approach is to study the principal angles (`angles between flats') between the subspaces spanned by $W_k$ and $W_{k+1}$. This is based on the observation that if the maximal principal angle $\theta_{\text{max}} = 0$, then the two subspaces span the same space. The maximal principal angle between subspaces spanned by $W_k$ and $W_{k+1}$ can be found by computing $U \Sigma V^T = W_k^T W_{k+1}$ \cite{knyazev2012principal}. The cosines of the principal angles between $W_k$ and $W_{k+1}$ are the singular values of $\Sigma$, thus $\theta_{\text{max}} = \cos^{-1}(\sigma_{\text{min}})$. Computation of $\theta_{\text{max}}$ requires two matrix multiplications to form $V \Sigma^2 V^T = (W_k^T W_{k+1})^T (W_k^T W_{k+1})$ and then a round of inverse iteration to find $\sigma_{\text{min}}^2$. Although this approach confirms the convergence definitively, in practice, we avoid this extra computation by using the convergence of eigenvalues (of $\Phi$) between iterations as a surrogate. Since eigenvalues are already computed during the algorithm, no additional computations are required. Although tracking eigenvalue of $\Phi$ for convergence is vulnerable to false positive errors, in practice, it works consistently well. Therefore, we recommend to use the eigenvalues as a preliminary check before defaulting to principal angles.

\label{app:convergence_criteria}
\end{appendices}

\end{document}